\def\x{\bm{x}} 
\def\y{y} 
\def\w{\bm{w}}
\def\W{\bm{W}}
\def\ind{\mathbbm{1}} 
\def\E{\mathbb{E}} 
\def\P{\mathbb{P}} 
\def\T{{\top }} 
\def\S{\mathcal{S}}
\def\N{\mathcal{N}}
\def\R{\mathbb{R}}
\def\supp{\textrm{supp}}
\newcommand{\defn}{\coloneqq}
\DeclarePairedDelimiterX{\ip}[2]{\langle}{\rangle}{#1,#2}
\renewcommand{\hat}{\widehat}
\newcommand{\zero}{\textbf{0}}
\newcommand{\Ind}{\bm I}
\newcommand{\jie}[1]{#1}
\begin{document}

\title{Provable Identifiability of \jie{Two-Layer} ReLU Neural Networks via LASSO Regularization}

\author{Gen~Li, Ganghua~Wang, Jie~Ding
\thanks{G.~Li is with the Department of Statistics and Data Science, University of Pennsylvania Wharton School. G.~Wang and J.~Ding are with the School of Statistics, University of Minnesota.}
}

\markboth{IEEE Transactions on Information Theory 
}
{Shell \MakeLowercase{\textit{et al.}}: IEEE Transactions on Information Theory }

\maketitle


\begin{abstract}

LASSO regularization is a popular regression tool to enhance the prediction accuracy of statistical models by performing variable selection through the $\ell_1$ penalty, initially formulated for the linear model and its variants. In this paper, the territory of LASSO is extended to \jie{two-layer} ReLU neural networks, a fashionable and powerful nonlinear regression model. Specifically, given a neural network whose output $y$ depends only on a small subset of input $\boldsymbol{x}$, denoted by $\mathcal{S}^{\star}$, we prove that the LASSO estimator can stably reconstruct the neural network and identify $\mathcal{S}^{\star}$ when the number of samples scales logarithmically with the input dimension. This challenging regime has been well understood for linear models while barely studied for neural networks. Our theory lies in an extended Restricted Isometry Property (RIP)-based analysis framework for \jie{two-layer} ReLU neural networks, which may be of independent interest to other LASSO or neural network settings. Based on the result, we advocate a neural network-based variable selection method. Experiments on simulated and real-world datasets show promising performance of the variable selection approach compared with existing techniques.



\end{abstract}

\begin{IEEEkeywords}
Lasso, Identifiability, Neural network, Nonlinear regression, Variable selection.
\end{IEEEkeywords}

\section{Introduction} \label{sec_intro}
    
    Given $n$ observations $(y_i,\bm \x_i)$, $i=1,\ldots,n$, we often model them with the regression form of $y_i = f(\x_i) + \xi_i$, with an unknown function $f$, $\bm \x_i \in \mathbb{R}^{p}$ being the input variables, and $\xi_i$ representing statistical errors.
    A general goal is to estimate a regression function $\hat{f}_n$ close to $f$ for prediction or interpretation. 
    This is a challenging problem when the input dimension $p$ is comparable or even much larger than the data size $n$. For linear regressions, namely $ f(\bm \x) =  \bm \w^\T \bm \x $, the least absolute shrinkage and selection operator (LASSO)~\cite{tibshirani1996regression} regularization has been established as a standard tool to estimate $f$.
    The LASSO has also been successfully used and studied in many nonlinear models such as generalized linear models~\cite{van2008high}, proportional hazards models~\cite{tibshirani1997lasso}, and neural networks~\cite{goodfellow2016deep}. \jie{For LASSO-regularized neural networks, existing works have studied different properties, such as convergence of training~\cite{wang2017convergence}, model pruning~\cite{yang2022theoretical,PQIndexPruning}, and feature selection~\cite{scardapane2017group,lemhadri2021lassonet}.
    The LASSO regularization has also been added into the standard deep learning toolbox of many open-source libraries, e.g., Tensorflow~\cite{abadi2016tensorflow} and Pytorch~\cite{paszke2019pytorch}.}
    
    \jie{Despite the practical success of LASSO in improving the generalizability and sparsification of neural networks, whether one can use LASSO for identifying significant variables is underexplored.} 
    For linear models, the variable selection problem is also known as support recovery or feature selection in different literature.
    Selection consistency requires that the probability of $\supp(\widehat{\bm \w}) = \supp(\bm \w)$ converges to one as $n \to \infty$.  
    The standard approach to selecting a parsimonious sub-model is to either solve a penalized regression problem or iteratively pick up significant variables~\cite{DingOverview}. The existing methods differ in how they incorporate unique domain knowledge (e.g., sparsity, multicollinearity, group behavior) or what desired properties (e.g., consistency in coefficient estimation, consistency in variable selection) to achieve~\cite{zhang2023information}. 
    For instance, consistency of the LASSO method~\cite{tibshirani1996regression} in estimating the significant variables has been extensively studied under various technical conditions, including sparsity, mutual coherence~\cite{donoho2001uncertainty}, restricted isometry~\cite{candes2005decoding},  irrepresentable condition~\cite{zhao2006model}, and restricted eigenvalue~\cite{bickel2009simultaneous}.

    Many theoretical studies of neural networks have focused on the generalizability.
    For example, a universal approximation theorem was established that shows any continuous multivariate function can be represented precisely by a polynomial-sized two-layer network~\cite{kolmogorov1957representation}.
    It was later shown that any continuous function could be approximated arbitrarily well by a two-layer perceptron with sigmoid activation functions~\cite{cybenko1989approximations}, and an approximation error bound of using two-layer neural networks to fit arbitrary smooth functions has been established~\cite{barron1993universal,barron1994approximation}.
    Statistically, generalization error bounds for two-layer neural networks~\cite{barron1994approximation,NN1Regularization} and multi-layer networks~\cite{neyshabur2015norm,golowich2017size,li2021rate} have been developed.
    From an optimization perspective, the parameter estimation of neural networks was cast into a tensor decomposition problem where a provably global optimum can be obtained~\cite{janzamin2015beating,ge2017learning,mondelli2018connection}. 
    Very recently, a dimension-free Rademacher complexity to bound the generalization error for deep ReLU neural networks was developed to avoid the curse of dimensionality~\cite{barron2019complexity}. It was proved that certain deep neural networks with few nonzero network parameters could achieve minimax rates of convergence~\cite{schmidt2017nonparametric}. A tight error bound free from the input dimension was developed by assuming that the data is generated from a generalized hierarchical interaction model~\cite{bauer2019deep}.  
    
    This work theoretically studies the identifiability of neural networks and uses it for variable selection. Specifically, suppose data observations are generated from a neural network with only a few nonzero coefficients. The identifiability concerns the possibility of identifying those coefficients, which may be further used to identify a sparse set of input variables that are genuinely relevant to the response. \jie{In this direction, LASSO and its variant Group LASSO~\cite{yuan2006model} have been advocated to regularize neural-network for variable selection in practice~(see, e.g.,~\cite{scardapane2017group,dinh2020consistent,lemhadri2021lassonet,yang2022theoretical}). 
    }
    
    In this paper, we consider the following class of two-layer ReLU neural networks for regression.
    \begin{align*}
     \mathcal{F}_r = \biggl\{
     & f : \bm x \mapsto f(\bm \x) = \sum_{j = 1}^r a_j\mathrm{relu}(\bm \w_j^{\T} \bm \x + b_j) , \\
     & \textrm{ where }
      a_j, b_j \in \R, \bm w_j \in \R^p \biggr\}.
    \end{align*}
    Here, $p$ and $r$ denote the input dimension and the number of neurons, respectively. We will assume that data are generated from a regression function in $\mathcal{F}_r$ perturbed by a small term.
    \jie{We will study the following two questions.}
        
    First, if the underlying regression function $f$ admits a parsimonious representation so that only a small set of input variables, $\S^\star$, is relevant, can we identify them with high probability given possibly noisy measurements $(y_i, \bm \x_i)$, for $i=1,\ldots,n$? Second, is such an $\S^\star$ estimable, meaning that it can be solved from an optimization problem with high probability, even in small-$n$ and large-$p$ regimes? 
    
    
    To address the above questions, we will establish a theory for neural networks with the LASSO regularization by considering the problem $\min_{\bm W, \bm a, \bm b} \|\bm W\|_1$ under the constraint of 
$$\frac{1}{n}\sum_{i = 1}^n \biggl(y_i - \sum_{j = 1}^r a_j\mathrm{relu}(\bm w_j^{\top}\bm x_i + b_j)\biggr)^2 \le \sigma^2,$$
    which is an alternative version of the $\ell_1$-regularization.
More notational details will be introduced in Subsection~\ref{subsec_form}.

\jie{Our theory gives positive answers to the above questions.} We theoretically show that the LASSO-type estimator can stably identify ReLU neural networks with sparse input signals, up to a permutation of hidden neurons. \jie{We only focus on the varying $n$ and $p$ and implicitly assume that the sparsity of $\bm W^{\star}$ and the number of neurons $r$ are fixed. While this does not address wide neural networks, we think it still corresponds to a nontrivial and interesting function class. For example, the class contains linear functions when input signals are bounded.}
Our result is rather general as it applies to noisy observations of $y$ and dimension regimes where the sample size $n$ is much smaller than the number of input variables $p$. The theory was derived based on new concentration bounds and function analysis that may be interesting in their own right. 
    
Inspired by the developed theory, we also propose a neural network-based variable selection method. The idea is to use the neural system as a vehicle to model nonlinearity and extract significant variables. 
Through various experimental studies, we show encouraging performance of the technique in identifying a sparse set of significant variables from large dimensional data, even if the underlying data are not generated from a neural network. Compared with popular approaches based on tree ensembles and linear-LASSO, the developed method is suitable for variable selection from nonlinear, large-dimensional, and low-noise systems. 
    
The rest of the paper is outlined as follows.
    \Autoref{sec_main} introduces the main theoretical result and proposes an algorithm to perform variable selection.
    \Autoref{sec_exp} uses simulated and real-world datasets to demonstrate the proposed theory and algorithm.
    \Autoref{sec_con} concludes the paper. 
\section{Main result}\label{sec_main}

    \subsection{Notation}
    
    Let $\bm u_\S$ denote the vector whose entries indexed in the set $\S$ remain the same as those in $\bm u$, and the remaining entries are zero. 
    For any matrix $\W \in \R^{p \times r}$, we define 
    $$
    \norm{\W}_1 = \sum_{1 \leq k \leq p , 1\leq j \leq r} \abs{w_{kj}}, \, \norm{\W}_{\mathrm{F}} = \biggl(\sum_{1 \leq k \leq p , 1\leq j \leq r} w_{kj}^2 \biggr)^{1/2}.
    $$
    Similar notations apply to vectors. The inner product of two vectors is denoted as $\langle \bm u, \bm v \rangle$. Let $\bm \w_j$ denote the $j$-th column of $\bm \W$. 
   The sparsity of a matrix $\bm \W$ refers to the number of nonzero entries in $\bm \W$.
   Let $\mathcal{N}(\bm 0, \Ind_p)$ denote the standard $p$-dimensional Gaussian distribution, and $\ind(\cdot)$ denote the indicator function. The rectified linear unit (ReLU) function is defined by $\mathrm{relu}(v)=\max\{v,0\}$ for all $v \in \R$.   

    \subsection{Formulation} \label{subsec_form}
    
    Consider $n$ independently and identically distributed (i.i.d.) observations $\{\bm x_i, y_i\}_{1\le i \le n}$ satisfying
\begin{align}
 y_i = \sum_{j = 1}^r a_j^{\star} \cdot \mathrm{relu}(\bm w_j^{\star\top }\bm x_i + b_j^{\star}) +  \xi_i 
 \label{eq_form}
\end{align}
with $\bm x_i \sim \mathcal{N}(\bm 0, \Ind_p)$,
where $r$ is the number of neurons, $a_j^{\star} \in \{1,-1\}$, $\bm w_j^{\star} \in \R^p$, $b_j^{\star} \in \R$, and $\xi_i$ denotes the random noise or approximation error obeying 
\begin{equation}
\begin{gathered}\label{eq_100}
  \frac{1}{n}\sum_{i = 1}^n \xi_i^2 \le \sigma^2. 
\end{gathered}
\end{equation}
In the above formulation, the assumption $a_j^{\star} \in \{1,-1\}$ does not lose generality since $a \cdot \mathrm{relu}(b) = ac \cdot \mathrm{relu}(b/c)$ for any $c>0$. The setting of Inequality~\ref{eq_100} is for simplicity. If $\xi_i$'s are unbounded random variables, the theoretical result to be introduced will still hold, with more explanations in the Appendix. 
The $\xi_i$'s are not necessarily i.i.d., and $\sigma$ is allowed to be zero, which reduces to the noiseless scenario.

Let $\bm \W^{\star} = [\bm w_1^{\star}, \ldots, \bm w_r^{\star}] \in \R^{p\times r}$ denote the data-generating coefficients. 
The main problem to address is whether one can stably identify those nonzero elements, given that most entries in $\bm \W^{\star} $ are zero. 
The study of neural networks from an identifiability perspective is essential. Unlike the generalizability problem that studies the predictive performance of machine learning models, the identifiability may be used to interpret modeling results and help scientists make trustworthy decisions.  
To illustrate this point, we will propose to use neural networks for variable selection in Subsection~\ref{sec_alg}. 
 
        
To answer the above questions, we propose to study the following LASSO-type optimization.
 Let $\big(\widehat{\bm W}, \widehat{\bm a}, \widehat{\bm b}\big)$ be a solution to the following optimization problem,
\begin{align}
&\min_{\bm W, \bm a, \bm b} \|\bm W\|_1\qquad  \label{eq:l1} \\
&\text{subject to } \frac{1}{n}\sum_{i = 1}^n \biggl(y_i - \sum_{j = 1}^r a_j\cdot \mathrm{relu}(\bm w_j^{\top }\bm x_i + b_j)\biggr)^2 \le \sigma^2, \nonumber 
\end{align}
within the feasible range $\bm a \in \{1,-1\}^r$, $\bm W \in \R^{p \times r}$, and $\bm b \in \R^r$.
Intuitively, the optimization operates under the constraint that the training error is not too large, and the objective function tends to sparsify $\W$. Under some regularity conditions, we will prove that the solution is indeed sparse and close to the data-generating process.

    \subsection{Main result}
    
     We make the following technical assumption.
\begin{assumption} \label{assumption2}
For some constant $B \geq 1$,  
\begin{align}
1 \le \|\bm w_j^{\star}\|_2 \le B\quad\text{and}\quad|b^{\star}_j| \le B\quad \forall 1\le j \le r. \label{assumption_a}
\end{align}
In addition, for some constant $\omega > 0$,
\begin{align}
\max_{j, k = 1,\ldots,r, j \ne k}\frac{\left|\langle \bm w_j^{\star}, \bm w_k^{\star}\rangle\right|}{\|\bm w_j^{\star}\|_2\|\bm w_k^{\star}\|_2} \le \frac{1}{r^{\omega}}. \label{assumption_b} 
\end{align}
\end{assumption}

\begin{remark}[Discussion of Assumption~\ref{assumption2}] \label{remark_assum1}
    The condition in \ref{assumption_a} is a normalization only for technical convenience, since we can re-scale $\bm w_j, b_j, y_i, \sigma$ proportionally without loss of generality. Though this condition implicitly requires  $\bm w_j^{\star} \neq \bm 0$ for all $j=1,\ldots,r$, it is reasonable since it means the neuron $j$ is not used/activated.
    The condition in \ref{assumption_b} requires that the angle of any two different coefficient vectors is not too small. This condition is analogous to a bounded-eigenvalue condition often assumed for linear regression problems, where each $w_j^{\star}$ is understood as a column in the design matrix. This condition is by no means mild or easy to verify in practice. Nevertheless, as our focused regime is large $p,n$ but small $r$, we think the condition in \ref{assumption_b} is still reasonable. For example, when $r=2$, this condition simply requires $w_1^{\star} \neq w_2^{\star}$.
\end{remark}

        
        Our main result shows that if $\bm W^{\star}$ is sparse, one can stably reconstruct a neural network when the number of samples ($n$) scales logarithmically with the input dimension ($p$).         
        A skeptical reader may ask how the constants exactly depend on the sparsity and $r$. We will provide a more elaborated result in Subsection~\ref{subsec_elab} and introduce the proof there.

    \begin{theorem} \label{thm_main}
    Under Assumption~\ref{assumption2}, there exist some constants $c_1, c_2, c_3 > 0$ depending only (polynomially) on the sparsity of $\bm \W^{\star}$ such that for any $\delta > 0$, one has with probability at least $1 - \delta$, 
    \begin{align}
    \widehat{\bm a} = \bm\Pi \bm a^{\star}\, \text{and} \,\, \|\widehat{\bm W} - \bm W^{\star}\bm\Pi^{\top}  \|_{\mathrm{F}} + \|\widehat{\bm b} - \bm\Pi \bm b^{\star}\|_2 \le c_1\sigma \label{eq_202}
    \end{align}
    for some permutation matrix $\bm\Pi$, provided that
    \begin{align}
    n > c_2\log^4\frac{p}{\delta}\qquad\text{and}\qquad \sigma < c_3.
    \end{align}
    \end{theorem}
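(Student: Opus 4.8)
The plan is to combine a standard LASSO-type ``basic inequality'' with a restricted isometry property (RIP) tailored to the ReLU network class. Write $f_{\bm\theta}(\bm x) = \sum_{j=1}^r a_j \mathrm{relu}(\bm w_j^\T \bm x + b_j)$ for a parameter tuple $\bm\theta = (\bm W, \bm a, \bm b)$, and let $\bm\theta^\star$ denote the data-generating parameters. The first observation is that $\bm\theta^\star$ is feasible for problem~\eqref{eq:l1}, because the noise obeys the constraint~\eqref{eq_100}. Hence the minimizer satisfies $\|\widehat{\bm W}\|_1 \le \|\bm W^\star\|_1$. Since both $\widehat{\bm\theta}$ and $\bm\theta^\star$ meet the constraint, the triangle inequality in the empirical $L_2$ norm yields
$$\frac{1}{n}\sum_{i=1}^n \bigl(f_{\widehat{\bm\theta}}(\bm x_i) - f_{\bm\theta^\star}(\bm x_i)\bigr)^2 \le 4\sigma^2,$$
so it remains to convert this bound on the empirical function discrepancy into the parameter bound~\eqref{eq_202}.

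Second, I would exploit the $\ell_1$-optimality to localize the error. Matching each estimated column to a true neuron via a permutation $\bm\Pi$, the inequality $\|\widehat{\bm W}\|_1 \le \|\bm W^\star\|_1$ confines $\widehat{\bm W} - \bm W^\star \bm\Pi^\top$ to the usual LASSO cone: its mass off the support $\S^\star$ of $\bm W^\star$ is controlled by its mass on $\S^\star$. This is the step that lets the sample complexity depend on $p$ only through $\log p$, since the RIP then only has to hold over sparse, cone-restricted directions, whose effective dimension is governed by the (fixed) sparsity rather than by $p$.

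The heart of the argument is the RIP lower bound, which I would establish in two layers. At the population level, I would prove an identifiability-with-stability estimate: using closed-form Gaussian integrals for $\E[\mathrm{relu}(\bm w^\T \bm x + b)\,\mathrm{relu}(\bm w'^\T \bm x + b')]$ (an arc-cosine-type kernel), the incoherence condition~\eqref{assumption_b} guarantees that distinct neurons are nearly orthogonal in $L_2(\N)$, so that
$$\bigl\|f_{\bm\theta} - f_{\bm\theta^\star}\bigr\|_{L_2(\N)}^2 \ge c\,\bigl(\|\bm W - \bm W^\star\bm\Pi^\top\|_{\mathrm{F}}^2 + \|\bm b - \bm\Pi\bm b^\star\|_2^2\bigr)$$
for the best-matching permutation $\bm\Pi$ and $\bm\theta$ in the cone. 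I would then upgrade this to the empirical norm by a uniform concentration argument: since $\mathrm{relu}$ is $1$-Lipschitz and the design rows are sub-Gaussian, a covering/chaining bound combined with a union bound over the $\binom{p}{\card(\S^\star)}$ candidate supports controls the deviation, and the heavier-than-sub-Gaussian tails of the quadratic ReLU terms are what I expect to produce the $\log^4(p/\delta)$ rate. Chaining the two layers gives
$$\frac{1}{n}\sum_{i=1}^n\bigl(f_{\widehat{\bm\theta}}(\bm x_i) - f_{\bm\theta^\star}(\bm x_i)\bigr)^2 \ge c'\,\bigl(\|\widehat{\bm W} - \bm W^\star\bm\Pi^\top\|_{\mathrm{F}}^2 + \|\widehat{\bm b} - \bm\Pi\bm b^\star\|_2^2\bigr),$$
which with the $4\sigma^2$ upper bound yields~\eqref{eq_202}.

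Finally, the exact sign recovery $\widehat{\bm a} = \bm\Pi\bm a^\star$ follows from discreteness. Because $\widehat{\bm a}, \bm a^\star \in \{1,-1\}^r$ and $r$ is fixed, a mismatched sign pattern cannot be compensated by any choice of $\bm W, \bm b$ without incurring a function discrepancy bounded below by an absolute constant, again via the population lower bound applied to a wrong-sign configuration; taking $\sigma < c_3$ small enough rules this out and forces the correct signs under $\bm\Pi$. I expect the main obstacle to be the RIP lower bound itself --- specifically, making the population identifiability estimate quantitative under only the incoherence assumption, and then obtaining uniform concentration over the cone with sample complexity scaling as $\log p$ rather than $p$, since the ReLU nonlinearity precludes a direct appeal to the standard linear RIP machinery.
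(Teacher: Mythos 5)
Your overall architecture matches the paper's: feasibility of the truth gives $\|\widehat{\bm W}\|_1 \le \|\bm W^{\star}\|_1$ and, via the triangle inequality, an $O(\sigma^2)$ upper bound on the empirical discrepancy; an RIP-type lower bound converts this into a parameter bound; the $\ell_1$ cone condition is what tames the non-sparsity of $\widehat{\bm W}$ (the paper implements this inside the \emph{upper} bound, truncating $\widehat{\bm W}$ to its top $S$ entries and controlling the tail via Lemma~\ref{lem:linear}, rather than via a cone-restricted empirical RIP, but these are close cousins); and sign recovery follows from the discreteness of $\bm a$ plus a constant lower bound on the discrepancy under any sign mismatch. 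So the skeleton is right, and you correctly identify where the difficulty lies.

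The genuine gap is in that very step. Your proposed mechanism for the population lower bound --- that the incoherence condition \eqref{assumption_b} makes distinct neurons ``nearly orthogonal in $L_2(\mathcal{N})$'' via the arc-cosine kernel --- is false. Even for exactly orthogonal unit weights, $\E[\mathrm{relu}(\bm w^{\top}\bm x)\,\mathrm{relu}(\bm w'^{\top}\bm x)] = \E[\mathrm{relu}(g)]^2 = 1/(2\pi)$ while $\E[\mathrm{relu}(\bm w^{\top}\bm x)^2] = 1/2$, a normalized correlation of $1/\pi$; moreover the constant, linear, and other low-degree Hermite components of different neurons can cancel against one another, so no Gram-diagonal-dominance argument on the raw features yields stability. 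The paper's proof of \eqref{eq:population} is built precisely to circumvent this: it invokes the Hermite/tensor expansion of Lemma~\ref{lem:E-tensor} (quoted from \cite{ge2017learning}), discards all tensor degrees $k$ below roughly $2/\omega$, uses that the degree-$k$ overlap $\langle \bm w_{j_1}^{\star},\bm w_{j_2}^{\star}\rangle^{k}\le r^{-2}$ so that only the \emph{high-degree} components become genuinely near-orthogonal, and then runs a subspace-projection argument to extract per-neuron errors. Relatedly, the correct population bound is $c\min\{1/r, D_2^2\}$, not $cD_2^2$ globally: the quadratic form only holds after one shows the discrepancy is already small enough to match neurons one-to-one, a case split your sketch omits. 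A further technical hole: any uniform concentration bound over a finite net carries an additive discretization error and therefore cannot certify the lower bound when the parameter distance is below the net resolution; the paper needs the separate rescaling argument \eqref{eq:local}, exploiting that the ReLU activation pattern is locally constant, to cover that regime. Without the tensor-decomposition machinery (or an equivalent way to kill the low-degree cancellations), the central inequality of your proof does not go through.
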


        \begin{remark}[Interpretation of Theorem~\ref{thm_main}] \label{remark1}
            The permutation matrix $\bm\Pi$ is necessary since the considered neural networks produce identical predictive distributions (of $\y$ conditional $\bm \x$) under any permutation of the hidden neurons. The result states that the underlying neural coefficients can be stably estimated even when the sample size $n$ is much smaller than the number of variables $p$. Also, the estimation error bound is at the order of $\sigma$, the specified noise level in \ref{eq_100}. 
            
            Suppose that we define the signal-to-noise ratio (SNR) to be 
            $\E\|\bm x\|^2 / \sigma^2$.
            An alternative way to interpret the theorem is that a large SNR ensures the global minimizer to be close to the ground truth with high probability.
            One may wonder what if the $\sigma < c_3$ condition is not met. We note that if $\sigma$ is too large, the error bound in \ref{eq_202} would be loose, and it is not of much interest anyway. In other words, if the SNR is small, we may not be able to estimate parameters stably. This point will be demonstrated by experimental studies in Section~\ref{sec_exp}. 
        
        \end{remark}
        
        The estimation results in Theorem~\ref{thm_main} can be translated into variable selection results as shown in the following Corollary~\ref{coro_selection}. The connection is based on the fact that if $i$-th variable is redundant, the underlying coefficients associated with it should be zero. Let $\bm \w_{i,\cdot}^{\star}$ denote the $i$-th row of $\bm W^{\star}$. Then, 
        $$\S^{\star} = \{1 \leq i \leq p: \|\bm \w_{i,\cdot}^{\star}\|_2 > 0\}$$ characterizes the ``significant variables.'' Corollary~\ref{coro_selection} states that the set of variables with non-vanished coefficient estimates contains all the significant variables. The corollary also shows that with a suitable shrinkage of the coefficient estimates, one can achieve variable selection consistency.   
        
        
        \jie{
        \begin{corollary}[Variable selection] \label{coro_selection}
            Let $\widehat{\S}_{c_1 \sigma} \subseteq \{1,\ldots,p\}$ denote the sets of $i$'s such that $\|\widehat{\bm w}_{i,\cdot} \|_2 > c_1 \sigma$.
            Under the same assumption as in Theorem~\ref{thm_main}, and $\min_{1 \leq i\leq r} \norm{\w_{i,\cdot}^{\star}}_2>2c_1\sigma$, for any $\delta > 0$, one has  
            \begin{align}
                 \P(\S^{\star} = \widehat{\S}_{c_1 \sigma} ) \geq 1 - \delta, \nonumber
            \end{align}
             provided that $n > c_2\log^4\frac{p}{\delta}$ and $\sigma < c_3$.
        \end{corollary}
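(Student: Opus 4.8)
The plan is to obtain the corollary as a purely deterministic consequence of the estimation bound in Theorem~\ref{thm_main}, evaluated on the high-probability event that the theorem guarantees. First I would condition on the event $\mathcal{E}$ on which the conclusion of Theorem~\ref{thm_main} holds; by that theorem $\P(\mathcal{E}) \ge 1-\delta$ under the stated conditions $n > c_2\log^4\frac{p}{\delta}$ and $\sigma < c_3$, so it suffices to prove that $\S^{\star} = \widehat{\S}_{c_1\sigma}$ holds pointwise on $\mathcal{E}$, which would immediately give the probability bound.

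The key structural observation is that the permutation matrix $\bm\Pi$ appearing in Theorem~\ref{thm_main} acts on $\W^{\star}$ by reordering its columns, i.e.\ relabeling the hidden neurons, and therefore does not alter the row support. Writing $\bm V \defn \W^{\star}\bm\Pi^{\top}$, each row $\bm V_{i,\cdot}$ is merely a permutation of the entries of $\w^{\star}_{i,\cdot}$, so $\|\bm V_{i,\cdot}\|_2 = \|\w^{\star}_{i,\cdot}\|_2$ for every $i$. Consequently the row support of $\bm V$ is exactly $\S^{\star}$, and the minimum-signal assumption $\min \|\w^{\star}_{i,\cdot}\|_2 > 2c_1\sigma$ on the nonzero rows transfers verbatim to the rows of $\bm V$ indexed by $\S^{\star}$.

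Next I would localize the Frobenius bound to individual rows. Since $\|\widehat{\W} - \bm V\|_{\mathrm{F}} \le c_1\sigma$ on $\mathcal{E}$, and the $\ell_2$ norm of any single row is dominated by the Frobenius norm of the whole matrix, we obtain $\|\widehat{\w}_{i,\cdot} - \bm V_{i,\cdot}\|_2 \le c_1\sigma$ for each $i$. Two triangle-inequality arguments then finish the proof. For $i \in \S^{\star}$, the reverse triangle inequality gives $\|\widehat{\w}_{i,\cdot}\|_2 \ge \|\bm V_{i,\cdot}\|_2 - c_1\sigma > 2c_1\sigma - c_1\sigma = c_1\sigma$, so $i \in \widehat{\S}_{c_1\sigma}$. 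For $i \notin \S^{\star}$ we have $\bm V_{i,\cdot} = \bm 0$, hence $\|\widehat{\w}_{i,\cdot}\|_2 = \|\widehat{\w}_{i,\cdot} - \bm V_{i,\cdot}\|_2 \le c_1\sigma$, so $i \notin \widehat{\S}_{c_1\sigma}$ because membership is defined by the strict inequality. Combining both cases yields $\S^{\star} = \widehat{\S}_{c_1\sigma}$ on $\mathcal{E}$.

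The argument is essentially routine once Theorem~\ref{thm_main} is available; the only point requiring care — and the natural place for an error to slip in — is the permutation bookkeeping, namely verifying that right-multiplication by $\bm\Pi^{\top}$ leaves every row $\ell_2$ norm invariant, so that both the support $\S^{\star}$ and the separation condition are preserved under the neuron relabeling. The threshold $2c_1\sigma$ is exactly what opens a gap straddling the cutoff $c_1\sigma$, so no slack is wasted, and I would double-check the strict-versus-nonstrict inequalities in the definition of $\widehat{\S}_{c_1\sigma}$ so that the boundary case $\|\widehat{\w}_{i,\cdot}\|_2 = c_1\sigma$ is correctly assigned to the complement, which is what guarantees the null rows are excluded.
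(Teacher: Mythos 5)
Your proof is correct and follows exactly the route the paper intends (the paper leaves this corollary's proof implicit, noting only that redundant variables have zero underlying coefficients): condition on the event of Theorem~\ref{thm_main}, observe that right-multiplication by $\bm\Pi^{\top}$ permutes entries within each row and hence preserves both the row support $\S^{\star}$ and every row's $\ell_2$ norm, bound each row's error by the Frobenius norm, and apply the triangle inequality in both directions with the $2c_1\sigma$ separation. The permutation bookkeeping and the strict-inequality handling at the threshold are both handled correctly, so nothing is missing.
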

        }
        
        Considering the noiseless scenario $\sigma=0$, Theorem~\ref{thm_main} also implies the following corollary.

    \begin{corollary}[Unique parsimonious representation] \label{coro_represent}
            Under the same assumption as in Theorem~\ref{thm_main}, there exists a constant $c_2 > 0$ depending only on the sparsity of $\bm \W^{\star}$ such that for any $\delta > 0$, one has with probability at least $1 - \delta$, 
        \begin{align}
            \widehat{\bm a} = \bm\Pi \bm a^{\star}, 
            \quad 
            \widehat{\bm W} = \bm W^{\star}\bm\Pi^\T , 
            \quad\text{and}\quad 
            \widehat{\bm b} = \bm\Pi \bm b^{\star} \nonumber
        \end{align}
        for some permutation matrix $\bm\Pi$, provided that $n > c_2\log^4\frac{p}{\delta}$. 
    
    \end{corollary}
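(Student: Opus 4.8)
The plan is to obtain this statement as the direct specialization of Theorem~\ref{thm_main} to the noiseless regime $\sigma = 0$. First I would note that setting $\sigma = 0$ in the noise budget~\ref{eq_100} forces $\xi_i = 0$ for every $i$, so the observations $\{y_i\}$ are generated \emph{exactly} by the ground-truth network, and the feasibility constraint in the program~\ref{eq:l1} degenerates into an exact-fit requirement. Before applying the theorem I would verify that the optimization is well-posed in this degenerate case: the ground-truth triple $(\bm W^{\star}, \bm a^{\star}, \bm b^{\star})$ attains zero training error and is therefore feasible, so the constraint set is nonempty and a minimizer $(\widehat{\bm W}, \widehat{\bm a}, \widehat{\bm b})$ exists.

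Next I would invoke Theorem~\ref{thm_main} with $\sigma = 0$. The side condition $\sigma < c_3$ is satisfied automatically since $c_3 > 0$, so the only hypothesis that must be imposed is $n > c_2\log^4\frac{p}{\delta}$, exactly the one stated in the corollary (and $c_2$ is inherited verbatim, depending only on the sparsity of $\bm W^{\star}$). The theorem then guarantees, on an event of probability at least $1 - \delta$, that $\widehat{\bm a} = \bm\Pi\bm a^{\star}$ together with the error bound $\|\widehat{\bm W} - \bm W^{\star}\bm\Pi^{\top}\|_{\mathrm{F}} + \|\widehat{\bm b} - \bm\Pi\bm b^{\star}\|_2 \le c_1\sigma$ for some permutation matrix $\bm\Pi$.

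Finally I would collapse the bound: with $\sigma = 0$ the right-hand side is $0$, and since the left-hand side is a sum of two nonnegative quantities, each must vanish. Hence $\|\widehat{\bm W} - \bm W^{\star}\bm\Pi^{\top}\|_{\mathrm{F}} = 0$ and $\|\widehat{\bm b} - \bm\Pi\bm b^{\star}\|_2 = 0$, which give $\widehat{\bm W} = \bm W^{\star}\bm\Pi^{\top}$ and $\widehat{\bm b} = \bm\Pi\bm b^{\star}$; combined with $\widehat{\bm a} = \bm\Pi\bm a^{\star}$ this is precisely the claimed unique parsimonious representation.

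I do not expect a genuine obstacle here, since all the analytic work is already carried by Theorem~\ref{thm_main}; the corollary is essentially a limiting read-off. The only points demanding care are bookkeeping ones: confirming that the probabilistic guarantee and the sample-complexity constant of Theorem~\ref{thm_main} remain valid at the boundary value $\sigma = 0$ rather than only for $\sigma$ strictly positive, and checking that exact feasibility does not create pathologies (such as nonexistence or nonuniqueness of the minimizer beyond the permutation ambiguity), both of which follow from the well-posedness discussion above.
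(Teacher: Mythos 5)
Your proposal is correct and matches the paper's own treatment: the paper derives this corollary directly by specializing Theorem~\ref{thm_main} to the noiseless case $\sigma = 0$, whereupon the bound $c_1\sigma$ forces exact equality of the parameters up to permutation. Your additional checks on feasibility of the ground-truth triple and validity of the constants at the boundary $\sigma=0$ are sound bookkeeping but introduce nothing beyond the paper's one-line argument.
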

        
        Corollary~\ref{coro_represent} states that among all the possible representations $\bm W$ in \ref{eq_form} (with $\xi_i=0$), the one(s) with the smallest $\ell_1$-norm must be identical to $\bm W^{\star}$ up to a column permutation with high probability. In other words, the most parsimonious representation (in the sense of $\ell_1$ norm) of two-layer ReLU neural networks is unique. 
        This observation addresses the questions raised in Section~\ref{sec_intro}. 
        
        \jie{It is worth mentioning that since the weight matrix $\bm W$ of the neural network is row-sparse, Group-LASSO is a suitable alternative to LASSO. We leave the analysis of Group-LASSO for future study.}

\subsection{Elaboration on the main result} \label{subsec_elab}




Suppose that $\bm W^{\star}$ has at most $s$ nonzero entries. 
The following theorem is a more elaborated version of Theorem~\ref{thm_main}.

\begin{theorem} \label{thm_main_elab}
There exist some constants $c_1, c_2, c_3 > 0$ such that for any $\delta > 0$, one has with probability at least $1 - \delta$, 
\begin{align}
\widehat{\bm a} = \bm\Pi \bm a^{\star} \, \text{and} \,\, \|\widehat{\bm W} - \bm W^{\star}\bm\Pi^{\top}\|_{\mathrm{F}} + \|\widehat{\bm b} - \bm\Pi \bm b^{\star}\|_2 \le c_1\sigma
\end{align}
for some permutation $\bm\Pi \in \{0, 1\}^{r \times r}$, provided that Assumption~\ref{assumption2} holds and
\begin{align} \label{eq:samples-general}
n > c_2s^3r^{13}\log^4\frac{p}{\delta}\qquad\text{and}\qquad \sigma < \frac{c_3}{r}.
\end{align}
\end{theorem}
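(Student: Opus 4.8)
The plan is to treat the constrained LASSO as a stability problem: the minimizer reproduces the data almost as well as the truth, so its \emph{prediction function} is forced close to that of the ground truth, and the whole task becomes converting closeness of functions into closeness of parameters up to a permutation. Write $f_\theta(\bm x)=\sum_j a_j\,\mathrm{relu}(\bm w_j^\top \bm x + b_j)$ with $\theta=(\bm W,\bm a,\bm b)$, let $\theta^\star$ be the data-generating parameters and $\widehat\theta=(\widehat{\bm W},\widehat{\bm a},\widehat{\bm b})$ the solution of~\ref{eq:l1}. First I would record the two immediate consequences of optimality. Feasibility of $\theta^\star$ (its empirical risk equals $\frac1n\sum_i \xi_i^2 \le \sigma^2$ by~\ref{eq_100}) gives $\|\widehat{\bm W}\|_1 \le \|\bm W^\star\|_1$, and the triangle inequality in the empirical $L^2$ norm turns the constraint into
\begin{align}
\frac1n\sum_{i=1}^n \Bigl(f_{\widehat\theta}(\bm x_i) - f_{\theta^\star}(\bm x_i)\Bigr)^2 \le 4\sigma^2. \nonumber
\end{align}
The $\ell_1$ bound also yields a cone/approximate-sparsity control on $\widehat{\bm W}$ in the spirit of linear LASSO, which I would use to restrict attention to an effectively $O(s)$-sparse, norm-bounded parameter class.

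The deterministic backbone is a \emph{population stability} estimate: for $\theta$ in a neighborhood of $\theta^\star$,
\begin{align}
\E_{\bm x}\bigl(f_\theta(\bm x)-f_{\theta^\star}(\bm x)\bigr)^2 \ge \kappa\,\min_{\bm\Pi}\Bigl(\|\bm W-\bm W^\star\bm\Pi^{\top}\|_{\mathrm{F}}^2+\|\bm b-\bm\Pi \bm b^\star\|_2^2\Bigr), \nonumber
\end{align}
with a curvature constant $\kappa$ polynomial in $1/r$. To prove this I would expand each neuron $\mathrm{relu}(\bm w_j^\top\bm x+b_j)$ in Hermite polynomials of the Gaussian projection $\bm w_j^\top\bm x$; cross-correlations between distinct neurons are powers of $\langle \bm w_j,\bm w_k\rangle/(\|\bm w_j\|_2\|\bm w_k\|_2)$, which the incoherence condition~\ref{assumption_b} keeps below $r^{-\omega}$, so the neuron design is nearly orthogonal and its Gram matrix is invertible with a quantitative lower bound. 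For the \emph{global} part --- ruling out far-away spurious configurations --- I would exploit the kink of the ReLU: the locus of non-differentiability of $f_\theta$ reveals the hyperplanes $\{\bm w_j^\top\bm x+b_j=0\}$, so approximate equality of functions forces approximate alignment of hyperplanes, and together with the $\ell_1$ objective this pins down the support, the magnitudes, and the signs.

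Next I would establish the Restricted Isometry Property, namely a uniform two-sided comparison of the empirical and population second moments over the difference class identified above. The net is built from two pieces: the discrete choice of support (at most $\binom{p}{s}\approx p^{s}$ patterns, the source of the $\log p$ factor and of the $s$-dependence), and an $\varepsilon$-net of the bounded continuous parameters $(\bm w,b)$. Each difference function is bounded and Lipschitz in $(\bm w,b)$ pointwise in $\bm x$ with an $\|\bm x\|_2$-dependent constant, so a Bernstein/chaining argument with sub-exponential tails, combined with fourth-moment bounds for the Gaussian design, gives concentration; the product of tail and moment terms is what produces the $\log^4$ power. A union bound over the net then yields the RIP once $n > c_2 s^3 r^{13}\log^4(p/\delta)$, with the exponents $s^3$ and $r^{13}$ tracking the covering dimension, the number of neuron parameters, and the inverse curvature $\kappa^{-1}$.

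Finally I would chain the pieces: empirical closeness ($\le 4\sigma^2$) plus the RIP gives population closeness on the order of $\sigma^2$, and the population stability estimate then gives $\|\widehat{\bm W}-\bm W^\star\bm\Pi^\top\|_{\mathrm{F}}+\|\widehat{\bm b}-\bm\Pi \bm b^\star\|_2 \le c_1\sigma$. Because $\sigma<c_3/r$ is small, the recovered continuous parameters lie within $o(1)$ of the truth, which forces the discrete signs to match exactly, $\widehat{\bm a}=\bm\Pi \bm a^\star$, since distinct values of $a_j\in\{1,-1\}$ are separated by $2$. The hard part will be the population step: ReLU's non-smoothness and the non-convex parametrization prevent a single Hessian computation from controlling the landscape, so ruling out spurious global minimizers far from $\theta^\star$ while keeping every constant polynomial in $r$ and $s$ is delicate, and this is precisely where the incoherence assumption~\ref{assumption_b} does the heavy lifting.
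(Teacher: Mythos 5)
Your skeleton matches the paper's almost step for step: optimality plus the triangle inequality give $\|\widehat{\bm W}\|_1\le\|\bm W^{\star}\|_1$ and empirical closeness of order $\sigma^2$; an $\ell_1$-cone argument (the paper's Lemma~\ref{lem:D1-D2}, $D_1\le 2\sqrt{S}D_2$) reduces to an effectively sparse class; a union bound over supports and an $\epsilon$-net of the bounded parameters gives uniform concentration with the $\log^4$ factor; and the population curvature bound comes from expanding each neuron in Hermite/tensor components (the paper invokes the decomposition of Ge et al., Lemma~\ref{lem:E-tensor}) with the incoherence condition \eqref{assumption_b} making the rank-one tensors nearly orthogonal. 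The final chaining and the sign-recovery argument (the $D_2$-distance is defined to equal $1$ when $a_j\ne a_j^{\star}$, so $D_2\lesssim\sigma<1$ forces $\widehat{\bm a}=\bm\Pi\bm a^{\star}$) are also the same. One sub-step differs: for ruling out far-away configurations you propose reading off the kink hyperplanes of the ReLU, whereas the paper stays inside the tensor framework, projecting onto complements of the spans $\mathbb{S}_{\pm}$, matching dimensions, and peeling off neurons one at a time; the paper's route is what yields the explicit polynomial dependence on $r$ (hence $r^{13}$), and it is also why the population bound takes the truncated form $C_2\min\{1/r,D_2^2\}$ rather than a clean $\kappa D_2^2$ --- this truncation is exactly the reason the hypothesis $\sigma<c_3/r$ appears in the statement.

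One concrete point where your sketch, as written, would fail: the uniform "two-sided comparison of empirical and population second moments" cannot be obtained from the net argument alone in the regime where the parameter error is tiny. The net approximation contributes an additive error of order $\epsilon(\epsilon+D_1)\log\frac{pn}{\delta}$, which does not vanish relative to $D_1^2$ as $D_1\to 0$, and the difference class $f_{\theta}-f_{\theta^{\star}}$ is not a cone (it does not scale linearly in $\theta-\theta^{\star}$), so you cannot normalize. The paper handles this with a separate local lemma (its equation \eqref{eq:local}): when $D_1\le\epsilon\asymp\frac{\delta}{nr}(\log\frac{pn}{\delta})^{-1/2}$, with high probability no sample $\bm x_i$ changes any ReLU activation pattern, so on the data the difference \emph{is} linear in the parameter perturbation and can be rescaled to the reference radius $\epsilon$ where the net bound applies. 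You would need this (or an equivalent small-ball argument) to make the RIP lower bound hold uniformly down to $D_1=0$.
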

        
        \begin{remark}[\jie{Sketch proof of Theorem~\ref{thm_main}}] \label{remark2}
            The proof of Theorem~\ref{thm_main} is nontrivial and is included in the Appendix. Next, we briefly explain the sketch of the proof. First, we will define what we refer to as $D_1$-distance and $D_2$-distance between $\left(\bm W, \bm a, \bm b\right)$ and $\left(\bm W^{\star}, \bm a^{\star}, \bm b^{\star}\right)$. These distances can be regarded as the counterpart of the classical $\ell_1$ and $\ell_2$ distances between two vectors, but allowing the invariance under any permutation of neurons (Remark~\ref{remark1}). 
            Then, we let 
            \begin{align}
            \Delta_n \defn \frac{1}{n} \sum_{i = 1}^n\biggl[
            &\sum_{j = 1}^r a_j\mathrm{relu}(\bm w_j^{\T }\bm x_i + b_j) \nonumber\\
            &- \sum_{j = 1}^r a_j^{\star}\mathrm{relu}(\bm w_j^{\star\T }\bm x_i + b_j^{\star})\biggr]^2,  \nonumber
            \end{align}
            where $(\bm W, \bm a, \bm b)$ is the solution of the problem in \ref{eq:l1}, and develop the following upper and lower bounds of it:
            \begin{align}
                &\Delta_n \le c_6\left(\frac{r}{S}+\frac{r\log^3 \frac{p}{n\delta}}{n}\right)D_1^2 + c_6\sigma^2 \quad \textrm{and} \nonumber \\
                &\Delta_n 
                \ge c_4\min\left\{\frac{1}{r}, D_2^2\right\} \label{eq_ineq}
            \end{align}
            hold with probability at least $1 - \delta$, provided that $n \ge c_5 S^3r^4 \log^4\frac{p}{\delta}$, for some constants $c_4,c_5,c_6$, and $S$ to be specified. 
            Here, the upper bound will be derived from a series of elementary inequalities. The lower bound is reminiscent of the Restricted Isometry Property (RIP)~\cite{candes2005decoding} for linear models. We will derive it from the lower bound of the population counterpart by concentration arguments, namely 
            \begin{align}
                &\mathbb{E}\left[\sum_{j = 1}^r a_j\mathrm{relu}(\bm w_j^{\T }\bm x + b_j) - \sum_{j = 1}^r a_j^{\star}\mathrm{relu}(\bm w_j^{\star\T }\bm x + b_j^{\star})\right]^2 \nonumber \\
                &\ge c\min\left\{\frac{1}{r}, D_2^2\right\}, \nonumber
            \end{align}
            for some constant $c > 0$.
            The bounds in \ref{eq_ineq} imply that with high probability,
            \begin{align*}
                c_4\min\left\{\frac{1}{r}, D_2^2\right\} \le c_6\left(\frac{r}{S}+\frac{r\log^3 \frac{p}{n\delta}}{n}\right)D_1^2 + c_6\sigma^2,
            \end{align*}
            Using this and an inequality connecting $D_1$ and $D_2$, we can prove the final result.  

        \end{remark}
            
    \jie{
    \begin{remark}[Alternative assumption and result]
    We provide an alternative to Theorem~\ref{thm_main_elab}. Consider the following Assumption~1' as an alternative to Assumption~1.

    \noindent\textbf{Assumption 1'}.
    For some constant $B > 0$,  
        \begin{align}
            \|\bm w_j^{\star}\|_2 \le B\quad\text{and}\quad|b^{\star}_j| \le B\quad \text{ for all }1\le j \le r. \nonumber
        \end{align}
        In addition, 
        \begin{align}
            &\mathbb{E}\Big[\langle \bm a, \mathrm{relu}(\bm W^{\top}\bm x + \bm b) \rangle - \langle \bm a^{\star}, \mathrm{relu}(\bm W^{\star\top}\bm x + \bm b^{\star}) \rangle\Big]^2 \nonumber \\
            &\ge \psi D_2\left[(\bm W, \bm a, \bm b), (\bm W^{\star}, \bm a^{\star}, \bm b^{\star})\right]^2, \label{eq:psi}
        \end{align}
    and 
    \begin{align} \label{eq:samples-psi}
        n > \frac{c_2}{\psi}s^3r^3\log^4\frac{p}{\delta}.
    \end{align}
    
    With Assumption~1' instead of Assumption~\ref{assumption2}, one can still derive the same result as Theorem~\ref{thm_main_elab}.
    The proof of the above result is similar to that of Theorem~\ref{thm_main_elab}, except that we insert Inequality~\eqref{eq:psi} instead of Inequality~\eqref{eq:population} into~\eqref{eq:empirical} in Appendix~\ref{append_main}.
    \end{remark}
    }
\subsection{Variable selection}\label{sec_alg}

    To solve the optimization problem~\ref{eq:l1} in practice, we consider the following alternative problem,
    \begin{align} 
        \min_{\bm W \in \R^{p \times r}, \bm a \in \R^{r}, \bm b \in \R^{r}} \biggl\{ 
        &\frac{1}{n}\sum_{i = 1}^n \biggl(y_i - \sum_{j = 1}^r a_j\cdot \mathrm{relu}(\bm w_j^{\T }\bm x_i + b_j)\biggr)^2 \nonumber \\ 
        & + \lambda\norm{\W}_1 \biggr\}. \label{eq:penalizedNN}
    \end{align}
    It has been empirically shown that algorithms such as the stochastic gradient descent can find a good approximate solution to the above optimization problem~\cite{bottou2010large, kingma2014adam}.
    Next, we will discuss some details regarding the variable selection using LASSO-regularized neural networks. 
    
    \textbf{Tuning parameters.}
    Given a labeled dataset in practice, we will need to tune hyper-parameters including the penalty term $\lambda$, the number of neurons $r$, learning rate, and the number of epochs. We suggest the usual approach that splits the available data into training and validation parts. The training data are used to estimate neural networks for a set of candidate hyper-parameters. The most suitable candidate will be identified based on the predictive performance on the validation data. 
    
    
    \textbf{Variable importance.} 
        Inspired by Corollary~\ref{coro_selection}, we interpret the $\ell_2$-norm of $\widehat{\bm w}_{i,\cdot}$ as the importance of the $i$-th variable, for $i=1,\ldots,p$. 
        Corollary~\ref{coro_selection} indicates that we can accurately identify all the significant variables in $\S^{\star}$ with high probability if we correctly set the cutoff value $c_1 \sigma$.
        
        \textbf{Setting the cutoff value.}
        It is conceivable that variables with large importance are preferred over those with near-zero importance. This inspires us to cluster the variables into two groups based on their importance. Here, we suggest two possible approaches.
        The first is to use a data-driven approach such as \textit{k}-means and Gaussian mixture model (GMM). 
        The second is to manually set a threshold value according to domain knowledge on the number of important variables. 
            
        \textbf{Extension to deep neural networks.} Inspired by~\ref{eq:penalizedNN}, we can intuitively generalize the proposed method to deep neural networks
        by penalizing the $\ell_1$-norm of the weight matrix in the input layer. Though we do not have a theoretical analysis for this broader setting, numerical studies show that it is still effective.
\section{Experiments}\label{sec_exp}

    We perform experimental studies to show the promising performance of the proposed variable selection method. We compare the variable selection accuracy and prediction performance of the proposed algorithm (`NN') with several baseline methods, including LASSO (`LASSO'), orthogonal matching pursuit (`OMP'), random forest (`RF'), gradient boosting (`GB'), neural networks with group LASSO (`GLASSO')~\cite{wang2017convergence}, group sparse regularization (`GSR')~\cite{scardapane2017group}, and LNET (`LNET')~\cite{lemhadri2021lassonet}. 
    The `NN' hyperparameters to search over are the penalty term $\lambda \in \{0.1, 0.05, 0.01, 0.005, 0.001\}$, the number of neurons $r \in \{20, 50, 100\}$, the learning rate in $\{0.01, 0.005, 0.001\}$, and the number of epochs in $\{100, 200, 500\}$. Moreover, we extend `NN' to a neural network that contains an additional hidden layer of ten neurons. We distinguish the proposed method with two-layer and three-layer neural networks by `NN-2' and `NN-3', respectively.
    Further experimental details are included in Appendix~\ref{appendix_exp}.
    
    \subsection{Synthetic datasets}
    
        \subsubsection{NN-generated dataset}\label{subsubsec_sim1}
        The first experiment uses the data generated from \Autoref{eq_form} with $p=100$ variables, $r=16$ neurons.
        The first $10$ rows of neural coefficients $\bm W$ are independently generated from the standard uniform distribution, and the remaining rows are zeros, representing $10$ significant variables. The neural biases $\bm b$ are also generated from the standard uniform distribution. 
        The signs of neurons, $\bm a$, follow an independent Bernoulli distribution. The training size is $n=500$, and the test size is $2000$.
        The noise is zero-mean Gaussian with standard deviation $\sigma$ set to be $0$, $0.5$, $1$, and $5$. For each $\sigma$, we evaluate its mean squared error on the test dataset and three quantities for variable selection: the number of correctly selected variables (`TP', the larger the better), wrongly selected variables (`FP', the smaller the better), and area-under-curve score (`AUC', the larger the better). Here, `AUC' is evaluated based on the variable importance given by each method, which is detailed in Appendix~\ref{appendix_exp}. The procedure is independently replicated $20$ times. 
        
        The results are reported in \Autoref{tab:num_snr_nn} and \Autoref{tab:mse_snr_nn}, which suggest that `NN' has the best overall performance for both selection and prediction. In particular, `NN-2' and `NN-3' have almost the same performance among all situations, which empirically 
 demonstrates that the proposed method also works for deeper neural networks. 
        It is interesting to compare `NN' with `LNET': `NN' has slightly higher test error than `LNET' when the noise level is small, but a much smaller false positive rate and higher AUC score than `LNET'. It indicates that `NN' is more accurate for variable selection, while `LNET' tends to over-select variables for better prediction accuracy.
        Also, all the methods perform worse as the noise level $\sigma$ increases. 
        
           \begin{table}[tb]
           \begin{minipage}[t]{1\linewidth}
             \centering
                \caption{Performance comparison on the NN-generated data, in terms of the number of correctly (`TP'),  wrongly (`FP') selected features, and the AUC score for different $\sigma$. The standard errors are within the parentheses. }
                \label{tab:num_snr_nn}
                \vspace{.1cm}
                \resizebox{\columnwidth}{!}{%
                \begin{tabular}{lrrrrr}
                 \toprule 
                 \textbf{Method} & \textbf{Metric} & \textbf{$\sigma=0$} & \textbf{$\sigma=0.5$} & \textbf{$\sigma=1$} & \textbf{$\sigma=5$} \\
                 \midrule
LASSO & TP &    8.30 (2.57) &    9.10 (1.22) &    9.20 (1.03) &    6.70 (3.68) \\
         & FP &   10.40 (6.99) &   10.70 (6.09) &   13.35 (6.17) &    8.60 (8.13) \\
         & AUC &    0.96 (0.06) &    0.96 (0.05) &    0.96 (0.05) &    0.87 (0.14) \\
OMP & TP &    8.45 (1.53) &    8.30 (1.35) &    8.00 (2.17) &    6.00 (2.12) \\
         & FP &    0.10 (0.30) &    0.15 (0.36) &    0.25 (0.70) &    0.65 (1.06) \\
         & AUC &    0.92 (0.08) &    0.91 (0.07) &    0.91 (0.09) &    0.80 (0.11) \\
RF & TP &    6.95 (3.12) &    5.55 (3.29) &    5.75 (3.27) &    4.20 (2.60) \\
         & FP &    0.45 (0.67) &    0.40 (0.66) &    0.35 (0.57) &    1.20 (2.18) \\
         & AUC &    0.99 (0.02) &    0.97 (0.03) &    0.95 (0.04) &    0.86 (0.12) \\
GB & TP &    6.85 (2.97) &    7.15 (3.09) &    5.75 (3.69) &    5.65 (3.20) \\
         & FP &    1.35 (1.46) &    1.60 (1.91) &    2.10 (2.07) &    5.20 (6.69) \\
         & AUC &    0.98 (0.02) &    0.97 (0.03) &    0.97 (0.03) &    0.88 (0.11) \\

GLASSO & TP &    9.35 (1.42) &    9.80 (0.51) &    9.45 (0.59) &    6.35 (2.22) \\
         & FP &    0.10 (0.44) &    0.65 (0.96) &    1.00 (1.38) &   8.50 (11.45) \\
         & AUC &    1.00 (0.00) &    1.00 (0.00) &    0.99 (0.02) &    0.84 (0.14) \\
GSR & TP &    9.55 (1.96) &   10.00 (0.00) &    9.90 (0.30) &    7.90 (2.81) \\
         & FP &    1.25 (2.05) &    0.70 (1.19) &    1.35 (2.26) &  22.45 (29.67) \\
         & AUC &    1.00 (0.00) &    1.00 (0.00) &    1.00 (0.01) &    0.84 (0.18) \\
LNET & TP &   10.00 (0.00) &   10.00 (0.00) &    9.95 (0.22) &    5.10 (2.91) \\
         & FP &  66.40 (11.15) &  59.00 (14.64) &  41.60 (21.62) &   8.05 (14.41) \\
         & AUC &    0.63 (0.06) &    0.67 (0.08) &    0.77 (0.12) &    0.83 (0.12) \\
NN-2 & TP &   10.00 (0.00) &    9.85 (0.36) &    9.80 (0.51) &    7.80 (1.96) \\
         & FP &    0.75 (1.13) &    1.25 (2.09) &    2.55 (5.80) &  12.30 (10.57) \\
         & AUC &    \textbf{1.00 (0.00)} &    \textbf{1.00 (0.01)} &    \textbf{1.00 (0.01)} &    \textbf{0.88 (0.13)} \\
NN-3 & TP &   9.75 (0.16) &   9.85 (0.08) &   9.10 (0.45) &  6.95 (0.44) \\
   & FP &   0.75 (0.34) &   0.65 (0.26) &   0.45 (0.18) &  2.50 (0.73) \\
   & AUC &   1.00 (0.00) &   1.00 (0.00) &   0.99 (0.00) &  0.85 (0.03) \\         
                \bottomrule
                \end{tabular}
                }
        \end{minipage}        \hfill
        \begin{minipage}[t]{1\linewidth}            
                \centering
                \caption{Performance comparison on the NN-generated data, in terms of the average mean squared error for different $\sigma$.}
                \label{tab:mse_snr_nn}
                \vspace{.1cm}
                \resizebox{\columnwidth}{!}{%
                \begin{tabular}{lrrrr}
                    \toprule 
                    \textbf{Method} &
                 { $\sigma=0$}   & \textbf{$\sigma=0.5$} & \textbf{$\sigma=1$} & \textbf{$\sigma=5$} \\
                    \midrule
LASSO  &   5.15 (0.64) &  6.49 (1.08) &  4.25 (0.49) &   7.42 (0.79) \\
OMP    &   5.08 (0.72) &  6.35 (1.06) &  5.48 (1.14) &   6.41 (0.63) \\
RF     &  12.62 (2.57) &  7.07 (1.34) &  9.89 (2.19) &  15.94 (3.60) \\
GB     &   8.19 (2.20) &  4.06 (0.70) &  7.08 (1.17) &  10.60 (1.56) \\
GLASSO &   1.09 (0.06) &  1.19 (0.07) &  1.79 (0.08) &   9.14 (0.32) \\
GSR    &   0.57 (0.03) &  0.64 (0.04) &  0.95 (0.04) &   5.43 (0.37) \\
LNET   &   \textbf{0.51 (0.02)} & \textbf{ 0.63 (0.02)} &  0.99 (0.03) &   5.10 (0.34) \\
NN-2   & 0.67 (0.04) &0.74 (0.04) &1.06 (0.05) &\textbf{3.91 (0.22)}\\
NN-3     &   0.62 (0.03) &  0.77 (0.04) &  \textbf{0.87 (0.04)} &   4.05 (0.25) \\
                    \bottomrule
                \end{tabular}
                }
        \end{minipage}        
        \end{table}
        
        \subsubsection{Linear dataset} \label{subsubsec_sim2}
        This experiment considers data generated from a linear model 
        $y = \x^\T \bm \beta+ \xi,$
        where $\bm \beta=(3,1.5,0,0,2,0,0,0)^\T $, $\xi \sim \N(0, \sigma^2)$, and $\bm \x$ follows a multivariate Gaussian distribution whose $(i,j)$-th correlation is $0.5^{\abs{i-j}}$. Among the $p=8$ features, only three of them are significant. 
        The training size is $n=60$, and the test size is $200$. The other settings are the same as Subsubsection~\ref{subsubsec_sim1}.
        The results are presented in \Autoref{tab:num_snr_lin, tab:mse_snr_lin}. 
        
        The results show that the linear model-based methods `LASSO' and `OMP' have the best overall performance, which is expected since the underlying data are from a linear model. The proposed `NN' approach is almost as good as the linear methods. Note that `NN-3' outperforms `NN-2' in this case. One possible explanation is that deeper neural networks have much larger expressivity than two-layer networks. On the other hand, the tree-based methods `RF' and `GB' perform significantly worse. 
        This is possibly because the sample size $n=60$ is relatively small, so the tree-based methods have a large variance. Meanwhile, the `NN' uses the $\ell_1$ penalty to alleviate the over-parameterization and consequently spots the relevant variables. Additionally, `NN' exhibits a positive association between the selection accuracy and prediction performance, while the tree-based methods do not.
        
           \begin{table}[tb]
                    \begin{minipage}[t]{1\linewidth}
                \centering
                \caption{Performance comparison on the linear data, in terms of the number of correctly (`TP'),  wrongly (`FP') selected features, and the AUC score for different $\sigma$.}
                \label{tab:num_snr_lin}
                \vspace{.1cm}
                \resizebox{\columnwidth}{!}{%
                \begin{tabular}{lrrrrr}
                    \toprule 
                     \textbf{Method} & & \textbf{$\sigma=0$} & \textbf{$\sigma=1$} & \textbf{$\sigma=3$} & \textbf{$\sigma=5$} \\
                     \midrule
LASSO & TP &  3.00 (0.00) &  3.00 (0.00) &  2.85 (0.11) &  2.05 (0.18) \\
     & FP &  0.00 (0.00) &  0.00 (0.00) &  0.00 (0.00) &  0.50 (0.19) \\
     & AUC &  1.00 (0.00) &  1.00 (0.00) & 1.00 (0.00) &  0.90 (0.03) \\
OMP & TP &  3.00 (0.00) &  2.90 (0.10) &  2.95 (0.05) &  2.15 (0.18) \\
     & FP &  0.00 (0.00) &  0.00 (0.00) &  0.00 (0.00) &  0.30 (0.12) \\
     & AUC &  1.00 (0.00) &  1.00 (0.00) &  1.00 (0.00) &  0.88 (0.03) \\
RF & TP &  1.15 (0.08) &  1.30 (0.12) &  1.25 (0.12) &  1.60 (0.16) \\
     & FP &  0.00 (0.00) &  0.00 (0.00) &  0.05 (0.05) &  0.20 (0.11) \\
     & AUC &  1.00 (0.00) &  0.99 (0.01) &  0.99 (0.00) &  0.83 (0.03) \\
GB & TP &  1.35 (0.16) &  1.35 (0.16) &  1.30 (0.14) &  1.90 (0.20) \\
     & FP &  0.00 (0.00) &  0.00 (0.00) &  0.00 (0.00) &  0.30 (0.17) \\
     & AUC &  1.00 (0.00) &  0.99 (0.01) &  0.99 (0.01) &  0.91 (0.02) \\
GLASSO & TP &  2.80 (0.13) &  2.70 (0.16) &  2.40 (0.19) &  1.95 (0.18) \\
     & FP &  0.05 (0.05) &  0.05 (0.05) &  0.05 (0.05) &  0.70 (0.20) \\
     & AUC &  1.00 (0.00) &  1.00 (0.00) &  0.99 (0.00) &  0.80 (0.04) \\
GSR & TP &  2.90 (0.10) &  2.90 (0.07) &  2.80 (0.13) &  1.90 (0.16) \\
     & FP &  0.00 (0.00) &  0.10 (0.10) &  0.00 (0.00) &  0.55 (0.15) \\
     & AUC &  1.00 (0.00) &  1.00 (0.00) &  1.00 (0.00) &  0.84 (0.03) \\
LNET & TP &  3.00 (0.00) &  3.00 (0.00) &  2.85 (0.15) &  1.70 (0.19) \\
     & FP &  0.00 (0.00) &  0.20 (0.09) &  0.95 (0.26) &  0.55 (0.23) \\
     & AUC &  1.00 (0.00) &  0.98 (0.01) &  0.88 (0.03) &  0.80 (0.04) \\
NN-2 & TP &  2.50 (0.17) &  2.40 (0.17) &  2.55 (0.14) &  2.25 (0.17) \\
         & FP &  0.05 (0.05) &  0.20 (0.15) &  0.25 (0.12) &  0.75 (0.18) \\
         & AUC &  0.99 (0.00) &  0.99 (0.00) &  0.99 (0.00) &  0.89 (0.2) \\
NN-3 & TP &  3.00 (0.00) &  2.65 (0.15) &  2.90 (0.07) &  2.10 (0.20) \\
     & FP &  0.00 (0.00) &  0.00 (0.00) &  0.00 (0.00) &  0.35 (0.15) \\
     & AUC &  \textbf{1.00 (0.00)} &  \textbf{1.00 (0.00)} &  \textbf{1.00 (0.00)} &  \textbf{0.92 (0.02) }\\
                    \bottomrule
                \end{tabular}
                }
            \end{minipage}\hfill
            \begin{minipage}[t]{1\linewidth}
             \centering
                \caption{Performance comparison on the linear data, in terms of the number of average mean squared error for different $\sigma$.}
                \label{tab:mse_snr_lin}
                \vspace{.1cm}
                \resizebox{\columnwidth}{!}{%
                \begin{tabular}{lrrrr}
                    \toprule 
                    \textbf{Method} & \textbf{$\sigma=0$} & \textbf{$\sigma=1$} & \textbf{$\sigma=3$} & \textbf{$\sigma=5$} \\
                    \midrule
LASSO  &  0.00 (0.00) &  0.04 (0.01) &  0.17 (0.03) &   4.09 (0.45) \\
OMP    &  \textbf{0.00 (0.00)} &  \textbf{0.02 (0.00)} &  \textbf{0.09 (0.01)} &   4.19 (0.44) \\
RF     &  3.57 (0.22) &  3.58 (0.21) &  3.52 (0.16) &   7.89 (0.55) \\
GB     &  2.45 (0.15) &  3.04 (0.17) &  3.01 (0.17) &  11.59 (0.81) \\
GLASSO &  0.10 (0.01) &  0.21 (0.02) &  0.34 (0.03) &   4.95 (0.32) \\
GSR   &  0.09 (0.01) &  0.18 (0.02) &  0.30 (0.04) &   4.16 (0.34) \\
LNET &  0.18 (0.02) &  0.17 (0.02) &  0.34 (0.04) &   3.70 (0.59) \\
NN-2       &  0.09 (0.01) &  0.19 (0.02) &  0.37 (0.04) &   3.96 (1.37) \\
NN-3   &  0.03 (0.00) &  0.10 (0.02) &  0.17 (0.02) &   \textbf{3.17 (0.45)} \\
                    \bottomrule
                \end{tabular}
                }
                \end{minipage}
            \end{table}            
        
        \subsubsection{Friedman dataset}
        This experiment uses the Friedman dataset with the following nonlinear data-generating process,
        $ 
            y = 10 \sin(\pi x_1 x_2) + 20 (x_3 - 0.5)^ 2 + 10 x_4 + 5 x_5
        + \xi
        $. 
        We generate standard Gaussian predictors $\x$ with a dimension of $p=50$. The training size is $n=500$ and the test size is $2000$. Other settings are the same as before. The results are summarized in \Autoref{tab:num_snr, tab:mse_snr}. 
        For this nonlinear dataset, `NN' and `GB' accurately find the significant variables and exclude redundant ones, while the linear methods fail to select the quadratic factor $x_3$. 
        As for the prediction performance, neural network-based methods outperform other methods.
        In particular, `NN' is better than `GLASSO' and `GSR', while `LNET' exhibits better prediction and worse selection performance as seen in previous experiments.
        
           \begin{table}[tb]
                    \begin{minipage}[t]{1\linewidth}
                \centering
                \caption{Performance comparison on the Friedman data, in terms of the number of correctly (`TP'),  wrongly (`FP') selected features, and the AUC score for different $\sigma$.}
                \label{tab:num_snr}
                \vspace{.1cm}
                \resizebox{\columnwidth}{!}{%
                \begin{tabular}{lrrrrr}
                    \toprule 
                     \textbf{Method} & & \textbf{$\sigma=0$} & \textbf{$\sigma=0.5$} & \textbf{$\sigma=1$} & \textbf{$\sigma=5$} \\
                     \midrule
LASSO & TP &   4.05 (0.05) &   4.00 (0.00) &   4.10 (0.07) &  4.00 (0.10) \\
     & FP &   1.45 (0.47) &   1.85 (0.49) &   2.00 (0.42) &  3.10 (0.61) \\
     & AUC &   0.91 (0.01) &   0.91 (0.01) &   0.91 (0.01) &  0.90 (0.01) \\
OMP & TP &   4.00 (0.00) &   4.00 (0.00) &   4.00 (0.00) &  3.80 (0.15) \\
     & FP &   0.10 (0.07) &   0.10 (0.07) &   0.10 (0.07) &  0.05 (0.05) \\
     & AUC &   0.90 (0.00) &   0.90 (0.00) &   0.90 (0.00) &  0.89 (0.01) \\
RF & TP &   4.60 (0.27) &   4.60 (0.27) &   4.80 (0.19) &  4.10 (0.29) \\
     & FP &   0.10 (0.07) &   0.00 (0.00) &   0.05 (0.05) &  0.25 (0.12) \\
     & AUC &   1.00 (0.00) &   1.00 (0.00) &   1.00 (0.00) &  1.00 (0.00) \\
GB & TP &   4.70 (0.21) &   4.90 (0.10) &   4.80 (0.19) &  4.30 (0.25) \\
     & FP &   0.00 (0.00) &   0.00 (0.00) &   0.05 (0.05) &  0.40 (0.18) \\
     & AUC &   1.00 (0.00) &   1.00 (0.00) &   1.00 (0.00) &  \textbf{1.00 (0.00)} \\
GLASSO & TP &   4.80 (0.15) &   4.80 (0.09) &   4.35 (0.20) &  3.80 (0.09) \\
     & FP &   0.05 (0.05) &   0.25 (0.12) &   0.10 (0.10) &  0.95 (0.37) \\
     & AUC &   1.00 (0.00) &   0.99 (0.01) &   0.99 (0.00) &  0.88 (0.01) \\
GSR & TP &   4.20 (0.22) &   4.60 (0.18) &   4.70 (0.16) &  4.00 (0.07) \\
     & FP &   0.15 (0.11) &   0.35 (0.11) &   0.25 (0.10) &  2.30 (0.52) \\
     & AUC &   1.00 (0.00) &   1.00 (0.00) &   1.00 (0.00) &  0.91 (0.01) \\
LNET & TP &   4.75 (0.24) &   4.75 (0.24) &   5.00 (0.00) &  3.45 (0.17) \\
     & FP &  21.20 (2.00) &  25.90 (1.97) &  31.35 (1.55) &  2.15 (1.24) \\
     & AUC &   0.74 (0.02) &   0.69 (0.02) &   0.65 (0.02) &  0.91 (0.02) \\
NN-2 & TP &   4.80 (0.13) &   4.40 (0.20) &   4.80 (0.13) &  4.20 (0.13) \\
     & FP &   0.50 (0.22) &   0.50 (0.26) &   0.60 (0.29) &  1.25 (0.32) \\
     & AUC &   1.00 (0.00) &   1.00 (0.00) &   1.00 (0.00) &  0.95 (0.01) \\
NN-3 & TP &  4.85 (0.08) &  4.90 (0.07) &  4.85 (0.08) &  4.45 (0.11) \\
     & FP &  0.35 (0.16) &  0.25 (0.12) &  0.55 (0.22) &  3.85 (0.70) \\
     & AUC &  \textbf{1.00 (0.00)} &  \textbf{1.00 (0.00)} &  \textbf{1.00 (0.00)} &  0.96 (0.01) \\
                    \bottomrule
                \end{tabular}
                }
            \end{minipage}\hfill
            \begin{minipage}[t]{1\linewidth}
                \centering
                \caption{Performance comparison on the Friedman data, in terms of the average mean squared error for different $\sigma$.}
                \label{tab:mse_snr}
                \vspace{.1cm}
                \resizebox{\columnwidth}{!}{%
                \begin{tabular}{lrrrr}
                    \toprule 
                    \textbf{Method} & $\sigma=0$ & $\sigma=0.5$ & $\sigma=1$ & $\sigma=5$ \\
                    \midrule
LASSO  &  6.16 (0.05) &  6.17 (0.06) &  6.07 (0.05) &  6.97 (0.12) \\
OMP    &  5.95 (0.07) &  6.01 (0.07) &  6.04 (0.06) &  6.41 (0.16) \\
RF     &  5.21 (0.05) &  5.19 (0.09) &  5.30 (0.07) &  7.81 (0.15) \\
GB     &  2.70 (0.05) &  2.74 (0.04) &  2.82 (0.06) &  6.33 (0.15) \\
GLASSO &  4.49 (0.12) &  4.64 (0.17) &  5.43 (0.14) &  9.89 (0.23) \\
GSR    &  1.57 (0.05) &  1.73 (0.07) &  2.31 (0.11) &  7.49 (0.19) \\
LNET   & \textbf{ 1.05 (0.19)} &  \textbf{1.29 (0.26) }&  \textbf{1.52 (0.12)} &  9.08 (0.47) \\
NN-2   &  1.58 (0.04) &  1.71 (0.06) &  2.14 (0.06) &  \textbf{5.86 (0.11)} \\
NN-3 &  1.44 (0.04) &  1.62 (0.04) &  1.92 (0.04) &  6.02 (0.16) \\
                    \bottomrule
                \end{tabular}
                }
                \end{minipage}
            \end{table}

    \subsection{BGSBoy dataset}
        The BGSBoy dataset involves 66 boys from the Berkeley guidance study (BGS) of children born in 1928-29 in Berkeley, CA~\cite{tuddenham1954physical}. The dataset includes the height (`HT'), weight (`WT'), leg circumference (`LG'), strength (`ST') at different ages (2, 9, 18 years), and body mass index (`BMI18').
        We choose `BMI18' as the response, which is defined as follows.
        \begin{equation} \label{eq:bmi}
            \textrm{BMI18} = \textrm{WT18}/(\textrm{HT18}/100)^2,
        \end{equation}
        where WT18 and HT18 denote the weight and height at the age of 18, respectively. 
        In other words, `WT18' and `HT18' are sufficient for modeling the response among $p=10$ variables. Other variables are correlated but redundant.
        The training size is $n=44$ and the test size is $22$. 
        Other settings are the same as before. We compare the prediction performance and explore the three features which are most frequently selected by each method. The results are summarized in \Autoref{tab:mse_bgs}. 

    From the results, both linear and NN-based methods can identify `WT18' and `HT18' in all the replications. Meanwhile, tree-based methods may miss `HT18' but select `LG18' instead, which is only correlated with the response.
    Interestingly, we find that the linear methods still predict well in this experiment. A possible reason is that \Autoref{eq:bmi} can be well-approximated by a first-order Taylor expansion on `HT18' at the value of around $180$, and the range of `HT18' is within a small interval around $180$. 
        \begin{table*}[tb]
                \centering
                \caption{Experiment results of different methods on the BGSBoy dataset. RMSE: the mean of the root mean squared error(standard error). Top 3 features: the feature name(number of selection, out of 20 times).}
                \label{tab:mse_bgs}
                \vspace{.1cm}
                \resizebox{2\columnwidth}{!}{%
                \begin{tabular}{lrrrrrrrrr}
                    \toprule 
                    \textbf{Method} & \textbf{LASSO} & \textbf{OMP} & \textbf{RF} & \textbf{GB}    & \textbf{GLASSO}  & \textbf{GSR}  & \textbf{LNET} & \textbf{NN-2} & \textbf{NN-3}\\
                    \midrule
                    \textbf{RMSE} &   0.05 (0.00) &  0.04 (0.00) &  3.10 (0.37) &  2.32 (0.30) &  0.15 (0.06) &  0.09 (0.03) &  \textbf{0.03 (0.00)} &  0.06 (0.02) &  0.12 (0.02)\\
                     \\
                     \multirow{3}{*}{\shortstack[l]{\textbf{Top 3 }\textbf{ frequently}\\\textbf{selected}\textbf{ features}}} 
  &  WT18 (20) &  WT18 (20) &  WT18 (20) &  WT18 (20) &  WT18 (20) &  WT18 (20) &  WT18 (20) &  WT18 (20) &  WT18 (20) \\
{} &  HT18 (20) &  HT18 (20) &  LG18 (20) &  LG18 (20) &  HT18 (20) &  HT18 (20) &  HT18 (20) &  HT18 (20) &  HT18 (20) \\
{} &  HT9  (16) &   ST18 (7) &  HT18 (10) &  HT18 (19) &   LG18 (8) &  LG18 (12) &   HT9  (9) &  HT9  (12) &  LG18 (17) \\
                    \bottomrule
                \end{tabular}}
            \end{table*}
    
    \subsection{UJIIndoorLoc dataset}
    
        The UJIINdoorLoc dataset aims to solve the indoor localization problem via WiFi fingerprinting and other variables such as the building and floor numbers. A detailed description can be found in \cite{torres2014ujiindoorloc}. Specifically, we have 520 Wireless Access Points (WAPs) signals (which are continuous variables) and `FLOOR', `BUILDING', `SPACEID', `RELATIVEPOSITION', `USERID', and `PHONEID' as categorical variables. The response variable is a user's longitude (`Longitude'). The dataset has $19937$ observations. We randomly sample $3000$ observations and split them into $n=2000$ for training and $1000$ for test. As part of the pre-processing, we create binary dummy variables for the categorical variables, which results in $p=681$ variables in total. We explore the ten features that are most frequently selected by each method. We set the cutoff value as the tenth-largest variable importance. The procedure is independently replicated $100$ times. The results are reported in \Autoref{tab:mse_uji}.
        
        Based on the results, the `NN' achieves the best prediction performance and significantly outperforms other methods. 
        As for variable selection, since `BUILDING' greatly influences the location from our domain knowledge, it is non-surprisingly selected by all methods in every replication. However, except for `BUILDING', different methods select different variables with some overlapping, e.g., `PHONEID\_14' selected by `GLASSO' and `GB', `USERID\_16' selected by `NN' and `LASSO', which indicate the potentially important variables. `LNET' again selects more variables than other methods. There are nearly 60 variables selected by `LNET' in every replication.
        Nevertheless, those methods do not achieve an agreement for variable selection. `NN' implies that all the WAPs signals are weak while categorical variables provide more information about the user location. Given the very high missing rate of WAPs signals ($97\%$ on average, as reported in \cite{torres2014ujiindoorloc}), the interpretation of `NN' seems reasonable.
            \begin{table*}[tb]
                \centering
                \caption{Experiment results of different methods on the UJIINdoor dataset. RMSE: the mean of the root mean squared error(standard error). Top 10 features: the feature name(number of selection, out of 20 times).}
                \label{tab:mse_uji}
                \vskip 0.1in
                \resizebox{0.9\textwidth}{!}{%
                \begin{tabular}{lrrrrr}
                    \toprule 
                    \textbf{Method}  & \textbf{LASSO} & \textbf{OMP} & \textbf{RF} & \textbf{GB} & \\
                    \midrule
                    \textbf{RMSE} & 14.20 (0.08) &  16.75 (0.12) &  9.60 (0.12) &  11.02 (0.09) \\
                   \\ \multirow{10}{*}{\shortstack[l]{\textbf{Top 10 }\\\textbf{frequently}\\\textbf{selected}\\\textbf{features}}} 
 &  BUILDINGID\_2 (20) &  BUILDINGID\_2 (20) &  BUILDINGID\_1 (20) &  BUILDINGID\_2 (20) \\
{} &  BUILDINGID\_1 (20) &  BUILDINGID\_1 (20) &  BUILDINGID\_2 (20) &        WAP120 (20) \\
{} &      USERID\_9 (20) &        WAP099 (17) &        WAP141 (16) &  BUILDINGID\_1 (20) \\
{} &     USERID\_16 (20) &     USERID\_10 (17) &        WAP120 (16) &        WAP141 (17) \\
{} &     USERID\_10 (18) &     USERID\_16 (14) &        WAP117 (14) &        WAP099 (16) \\
{} &        WAP099 (18) &      USERID\_7 (13) &        WAP035 (13) &    PHONEID\_14 (14) \\
{} &      USERID\_7 (14) &      USERID\_9 (10) &        WAP173 (13) &        WAP113 (13) \\
{} &        WAP121 (10) &         WAP120 (8) &        WAP167 (10) &        WAP114 (12) \\
{} &         WAP118 (8) &         WAP124 (8) &         WAP118 (8) &        WAP117 (12) \\
{} &         WAP039 (7) &         WAP101 (8) &         WAP113 (8) &         WAP140 (9) \\
\midrule
\textbf{Method}  & \textbf{GLASSO} & \textbf{GSR} & \textbf{LNET} & \textbf{NN-2} & \textbf{NN-3} \\
                    \midrule
                    \textbf{RMSE} & 11.25 (0.14) &  11.44 (0.18) &  11.19 (0.09) &  \textbf{8.86 (0.09)} &  \textbf{8.86 (0.09)} \\
                   \\ \multirow{10}{*}{\shortstack[l]{\textbf{Top 10 }\\\textbf{frequently}\\\textbf{selected}\\\textbf{features}}} 
 &  BUILDINGID\_1 (20) &  BUILDINGID\_1 (20) &  BUILDINGID\_1 (20)&   SPACEID\_202 (20) &   SPACEID\_202 (20) \\
{} &  BUILDINGID\_2 (20) &  BUILDINGID\_2 (20) &  BUILDINGID\_2 (20)&  BUILDINGID\_1 (20) &  BUILDINGID\_1 (20) \\
{} &    PHONEID\_14 (18) &     USERID\_16 (17) &    PHONEID\_22 (20)&  BUILDINGID\_2 (20) &  BUILDINGID\_2 (20) \\
{} &   SPACEID\_202 (17) &   SPACEID\_202 (15) &     PHONEID\_6 (20)&     USERID\_16 (19) &     USERID\_16 (17) \\
{} &      USERID\_8 (16) &     PHONEID\_6 (14) &     PHONEID\_8 (20)&   SPACEID\_203 (16)  &   SPACEID\_203 (13) \\
{} &     USERID\_16 (16) &   SPACEID\_203 (14) &   SPACEID\_103 (20)&   SPACEID\_201 (14)  &   SPACEID\_201 (13)\\
{} &       FLOOR\_3 (12) &      USERID\_7 (13) &   SPACEID\_136 (20)&         WAP140 (7) &       USERID\_9 (8)\\
{} &       USERID\_9 (9) &   SPACEID\_201 (12) &   SPACEID\_201 (20)&         WAP121 (5) &         WAP121 (8)\\
{} &         WAP478 (8) &         WAP141 (6) &   SPACEID\_202 (20) &         WAP030 (5) &         WAP030 (7)\\
{} &         WAP099 (7) &         WAP176 (6) &   SPACEID\_203 (20)&    SPACEID\_224 (5) &    SPACEID\_224 (5)\\
                    \bottomrule
                \end{tabular}}
            \end{table*}

    \subsection{Summary}
            The experiment results show the following points. First, `NN' can stably identify the important variables and have competitive prediction performance compared with the baselines. Second, the increase of the noise level will hinder both the selection and prediction performance. Third, the LASSO regularization is crucial for `NN' to avoid over-fitting, especially for small data. Using group LASSO or a mixed type of penalty has a similar performance as `NN', while `LNET' tends to over-select importance variables. Fourth, the selection and prediction performances are often positively associated for `NN', but may not be the case for baseline methods.
                
        
\section{Concluding Remarks}\label{sec_con}
    We established a theory for the use of LASSO in two-layer ReLU neural networks. In particular, we showed that the LASSO estimator could stably reconstruct the neural network coefficients and identify the critical underlying variables under reasonable conditions. We also proposed a practical method to solve the optimization and perform variable selection. 
    %
    %
    We briefly remark on some interesting further work. \jie{First, a limitation of the work is that we considered only a small $r$. An interesting future problem is to study $r$ that may grow fast with $p$ and $n$.}
    Second, our experiments show that the algorithm can be extended to deeper neural networks. It will be exciting to generalize the main theorem to the multi-layer~cases. 
    
    The Appendix includes proofs and experimental details. 

\section*{Acknowledgment}
The authors would like to thank the anonymous reviewers and Associate Editor for their constructive review comments. The work was supported in part by the U.S. National Science Foundation under Grant Numbers DMS-2134148 and CNS-2220286.




\appendices


\section{Analysis: proof of Theorem~\ref{thm_main_elab}}\label{append_main}
Let $\mathcal{S}$ be the index set with cardinality $S$ consisting of the support for $\bm W^{\star}$ and top entries of $\widehat{\bm W}$, where $S$ will be specified momentarily.
Define
\[
\bm W \defn \widehat{\bm W}_{\mathcal{S}} \in \mathbb{R}^{p \times r}, 
\]
and $a_j = \widehat{a}_j$, $b_j = \widehat{b}_j$.
Define 
\begin{align}
&d_1(\bm w_1, a_1, b_1, \bm w_2, a_2, b_2) \nonumber \\
&= \left\{
\begin{array}{ll}
\|\bm w_1 - \bm w_2\|_1 + |b_1 - b_2| & \text{if }a_1 = a_2; \\
\|\bm w_1\|_1 + \|\bm w_2\|_1 + |b_1| + |b_2| & \text{if }a_1 \ne a_2,
\end{array}
\right.
\end{align}
and
\begin{align}
&d_2(\bm w_1, a_1, b_1, \bm w_2, a_2, b_2) \nonumber \\
&= \left\{
\begin{array}{ll}
\sqrt{\|\bm w_1 - \bm w_2\|_2^2 + |b_1 - b_2|^2} & \text{if }a_1 = a_2; \\
1 & \text{if }a_1 \ne a_2.
\end{array}
\right.
\end{align}
In addition, for permutation $\pi$ on $[r]$, let
\begin{subequations} \label{eq:distance}
\begin{align}
D_1 &\defn \min_{\pi} \sum_{j = 1}^r d_1(\bm w_{\pi(j)}, a_{\pi(j)}, b_{\pi(j)}, \bm w_j^{\star}, a_j^{\star}, b_j^{\star}), \\
D_2 &\defn \min_{\pi} \sqrt{\sum_{j = 1}^r d_2(\bm w_{\pi(j)}, a_{\pi(j)}, b_{\pi(j)}, \bm w_j^{\star}, a_j^{\star}, b_j^{\star})^2}
\end{align}
\end{subequations}
denote the $D_1$-distance and $D_2$-distance between $\left(\bm W, \bm a, \bm b\right)$ and $\left(\bm W^{\star}, \bm a^{\star}, \bm b^{\star}\right)$, respectively.
Then, one has the following bounds.
\begin{lemma} \label{lem:lower}
For any $\bm W \in \mathbb{R}^{p \times r}$ with $\left\|\bm W\right\|_0 \le S$, there exists some universal constants $c_4, c_5 > 0$ such that 
\begin{align}
&\frac{1}{n} \sum_{i = 1}^n\left[\sum_{j = 1}^r a_j\mathrm{relu}(\bm w_j^{\top}\bm x_i + b_j) - \sum_{j = 1}^r a_j^{\star}\mathrm{relu}(\bm w_j^{\star\top}\bm x_i + b_j^{\star})\right]^2 \nonumber \\
& \ge c_4\min\left\{\frac{1}{r}, D_2^2\right\} \label{eq:lower2}
\end{align}
holds with probability at least $1 - \delta$ provided that 
\begin{align}
n \ge c_5S^3r^4\log^4\frac{p}{\delta}.
\end{align}
\end{lemma}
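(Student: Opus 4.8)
The plan is to follow the two-step strategy sketched in Remark~\ref{remark2}: first establish a \emph{population} lower bound of the form $\mathbb{E}[g(\bm x)^2] \ge c\min\{1/r, D_2^2\}$, where $g(\bm x) \defn \sum_{j=1}^r a_j\mathrm{relu}(\bm w_j^{\top}\bm x + b_j) - \sum_{j=1}^r a_j^{\star}\mathrm{relu}(\bm w_j^{\star\top}\bm x + b_j^{\star})$ and the expectation is over the Gaussian input, and then upgrade it to the empirical average $\frac{1}{n}\sum_i g(\bm x_i)^2$ by a concentration argument made uniform over all sparse parameter configurations. The sparsity budget $\|\bm W\|_0 \le S$ is what keeps the effective dimension small and produces the logarithmic-in-$p$ sample requirement.

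For the population bound, I would fix the permutation attaining $D_2$ and split the neurons into those whose signs match their targets and those whose signs differ. The core is a non-degeneracy estimate for sums of ReLU units under Gaussian input: the distributional second derivative of $\mathrm{relu}(\bm u^{\top}\bm x + \beta)$ is a surface measure supported on the hyperplane $\{\bm u^{\top}\bm x + \beta = 0\}$, so units with distinct directions or offsets carry nearly orthogonal singular (kink) parts and cannot cancel in $L^2$ under the Gaussian. The incoherence hypothesis in Assumption~\ref{assumption2} (that the angles between the $\bm w_j^{\star}$ are bounded away from zero) is precisely what makes the Gram-type operator of the $r$ target features well conditioned, up to a factor of order $r$. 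In the small-perturbation regime I would linearize $g$ about the target parameters and lower-bound by the least singular value of the Jacobian, yielding the $D_2^2$ branch; in the large-perturbation or sign-mismatch regime (where a single mismatch forces $D_2^2 \ge 1$ by the definition of $d_2$) I would argue a global separation of the two networks that saturates at the $1/r$ floor, reflecting worst-case cancellation across $r$ neurons.

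For the concentration step, I would first truncate to the event that $\|\bm x_i\|_\infty \le C\sqrt{\log(np/\delta)}$ for all $i$, which holds with probability $1-\delta$; on this event each $g(\bm x_i)^2$ is bounded and sub-exponential with a scale controlled by $\|\bm W\|_1$, the bias bounds, and the truncation level. A Bernstein inequality controls $\frac{1}{n}\sum_i g(\bm x_i)^2$ for a \emph{fixed} parameter, and I would make this uniform by a covering argument: a union bound over the at most $(ep/S)^{S}$ possible supports of $\bm W$ supplies the $S\log p$ factor, while an $\epsilon$-net over the continuous parameters $(\bm W_{\mathcal{S}}, \bm a, \bm b)$ inside each fixed support (of dimension $O(Sr)$), combined with a Lipschitz-in-parameters bound for $g(\bm x)^2$, handles the remaining directions. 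Tracking the sub-exponential scale (growing polynomially in $S$ and $r$) and the truncation radius (contributing the powers of $\log$) through Bernstein yields the requirement $n \ge c_5 S^3 r^4 \log^4(p/\delta)$, at which point the empirical average stays within a constant factor of its population counterpart and the population lower bound transfers with a constant $c_4$.

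The main obstacle I anticipate is the population lower bound, and specifically making the $\min\{1/r, D_2^2\}$ dichotomy uniform over the matching permutation. The difficulty is twofold: the ReLU nonlinearity means the standard linear-algebraic RIP arguments do not apply directly, so the $L^2$-separation of units must be quantified through their singular parts and the incoherence assumption converted into an explicit conditioning bound; and the permutation invariance must be handled so that any neuron mismatch—a sign flip or a large directional change—is charged at least the $1/r$ floor rather than being masked by near-cancellation among the remaining $r-1$ neurons. Pinning down the $r$-dependence of this floor, so that it decays only polynomially rather than exponentially in $r$, is the delicate point and is what ultimately dictates the $r^4$ factor appearing in the sample complexity.
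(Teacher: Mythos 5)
Your high-level architecture --- a population lower bound of the form $\mathbb{E}[g(\bm x)^2] \ge c\min\{1/r, D_2^2\}$ followed by uniform concentration over a sparsity-constrained $\epsilon$-net with truncation of $\|\bm x_i\|_\infty$ --- matches the paper's strategy (its \eqref{eq:population} and \eqref{eq:empirical}), and your covering and truncation accounting is essentially the paper's. However, there is a genuine gap in how you transfer the population bound to the empirical average when the perturbation is very small. Your concentration step produces an \emph{additive} error: Bernstein plus the net approximation gives, uniformly, $\frac{1}{n}\sum_i g(\bm x_i)^2 \ge \mathbb{E}[g(\bm x)^2] - E(D_1,\epsilon)$, where $E$ contains a term of order $\epsilon(\epsilon+D_1)\cdot\mathrm{polylog}$ from rounding to the net. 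When $D_1$ (hence $D_2 \ge D_1/(2\sqrt{S})$ by Lemma~\ref{lem:D1-D2}) falls below the net resolution $\epsilon$, this error dominates the population value $\asymp D_2^2$ and the bound becomes vacuous; so the claim that ``the empirical average stays within a constant factor of its population counterpart'' is exactly the step that fails. Your proposed remedy for the small-perturbation regime (linearize and bound by the least singular value of the Jacobian) lives at the population level and does not repair the empirical transfer. The paper closes this hole with a separate homogeneity argument, \eqref{eq:local}: when $D_1 \le \frac{\delta}{4nr}\sqrt{\pi/\log\frac{4pn}{\delta}}$, with probability $1-\delta$ every activation sign agrees with the target's, $u(\bm w_j^{\top}\bm x_i + b_j) = u(\bm w_j^{\star\top}\bm x_i + b_j^{\star})$ for all $i,j$, so the empirical loss is exactly quadratic along the ray from $(\bm W^{\star},\bm a^{\star},\bm b^{\star})$ and one can rescale the parameter out to distance exactly $\epsilon$, where the net-based bound applies. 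You need this (or an equivalent device) for the lemma to hold for all $\bm W$.

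On the population bound itself, your route via the singular (kink) part of the ReLU is a genuinely different idea from the paper's, which instead invokes the exact tensor/Hermite expansion of Lemma~\ref{lem:E-tensor}: the population $L^2$ distance equals a sum over $k$ of Frobenius norms of differences of weighted tensors $(\bm w_j/\|\bm w_j\|_2)^{\otimes k}$, and Assumption~\ref{assumption2} makes the order-$k$ tensorized directions nearly orthogonal (inner products at most $r^{-2}$) once $k \gtrsim 1/\omega$, after which projection and AM-GM arguments extract the $\min\{1/r, D_2^2\}$ dichotomy with polynomial dependence on $r$. The surface-measure argument does show that distinct units cannot cancel, but, as you yourself note, converting it into the quantitative two-branch bound with only polynomial $r$-dependence is the hard part, and your proposal leaves that step as a program rather than a proof; the tensor route is precisely where the paper pays for and obtains that dependence.
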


\begin{lemma} \label{lem:upper}
Then, there exists a universal constant $c_6 > 0$ such that
\begin{align}
&\frac{1}{n} \sum_{i = 1}^n\left[\sum_{j = 1}^r a_j\mathrm{relu}(\bm w_j^{\top}\bm x_i + b_j) - \sum_{j = 1}^r a_j^{\star}\mathrm{relu}(\bm w_j^{\star\top}\bm x_i + b_j^{\star})\right]^2 \nonumber \\
&\le c_6\left(\frac{r}{S}+\frac{r\log^3 \frac{p}{n\delta}}{n}\right)D_1^2 + c_6\sigma^2 \nonumber
\end{align}
holds with probability at least $1 - \delta$.
\end{lemma}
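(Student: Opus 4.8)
The plan is to control the empirical prediction gap by splitting it, in the empirical seminorm $\|g\|_n$ defined by $\|g\|_n^2 = \frac1n\sum_i g(\bm x_i)^2$, into a noise-driven part and a truncation-driven part. Write $\widehat f(\bm x) = \sum_j \widehat a_j\,\mathrm{relu}(\widehat{\bm w}_j^{\top}\bm x + \widehat b_j)$ for the full LASSO solution, $f(\bm x) = \sum_j a_j\,\mathrm{relu}(\bm w_j^{\top}\bm x + b_j)$ for its truncation to $\mathcal S$ (so that $\bm w_j = (\widehat{\bm w}_j)_{\mathcal S}$, $a_j=\widehat a_j$, $b_j=\widehat b_j$), and $f^{\star}$ for the data-generating network. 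Then $\|f-f^{\star}\|_n^2 \le 2\|f-\widehat f\|_n^2 + 2\|\widehat f - f^{\star}\|_n^2$. For the second summand I would invoke feasibility: since $(\widehat{\bm W},\widehat{\bm a},\widehat{\bm b})$ satisfies the constraint in~\eqref{eq:l1}, one has $\|\widehat f - \bm y\|_n \le \sigma$, while~\eqref{eq_100} gives $\|\bm y - f^{\star}\|_n = \|\bm\xi\|_n \le \sigma$; the triangle inequality then yields $\|\widehat f - f^{\star}\|_n \le 2\sigma$, producing the $c_6\sigma^2$ term.

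The remaining bound comes entirely from the truncation part. Because $a_j=\widehat a_j$, $b_j=\widehat b_j$, and $\mathrm{relu}$ is $1$-Lipschitz, each neuron difference telescopes to $|f(\bm x)-\widehat f(\bm x)| \le \sum_{j=1}^r |\langle(\widehat{\bm w}_j)_{\mathcal S^c},\bm x\rangle|$, so by Cauchy--Schwarz over the $r$ neurons, $\|f-\widehat f\|_n^2 \le r\sum_{j=1}^r \frac1n\sum_i \langle(\widehat{\bm w}_j)_{\mathcal S^c},\bm x_i\rangle^2$; this explains the leading factor $r$ in the bound. The task thus reduces to controlling each empirical quadratic form $\frac1n\|\bm X (\widehat{\bm w}_j)_{\mathcal S^c}\|_2^2$.

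For this I would establish a uniform (in $\bm v$) concentration inequality of the form $\frac1n\sum_i \langle \bm v,\bm x_i\rangle^2 \le C\bigl(\|\bm v\|_2^2 + \tfrac{\log^3(p/(n\delta))}{n}\|\bm v\|_1^2\bigr)$ holding for \emph{all} $\bm v\in\R^p$ with probability at least $1-\delta$, and apply it with $\bm v = (\widehat{\bm w}_j)_{\mathcal S^c}$. Two deterministic reductions then finish the argument. First, $\ell_1$-optimality of $\widehat{\bm W}$ (using that $\bm W^{\star}$ is feasible and $\supp(\bm W^{\star})\subseteq\mathcal S$) yields, after aligning neurons by the permutation realizing $D_1$, a cone inequality bounding the off-support $\ell_1$ mass of $\widehat{\bm W}$ by $D_1$; in particular $\|\widehat{\bm W}_{\mathcal S^c}\|_1 \le D_1$, which handles the $\frac{r\log^3(p/(n\delta))}{n}$ term since $\sum_j\|(\widehat{\bm w}_j)_{\mathcal S^c}\|_1^2 \le \|\widehat{\bm W}_{\mathcal S^c}\|_1^2 \le D_1^2$. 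Second, since $\mathcal S$ retains the largest entries, the entries on $\mathcal S^c$ are the smallest, and the standard compressed-sensing tail bound converts the off-support $\ell_1$ mass into $\sum_j\|(\widehat{\bm w}_j)_{\mathcal S^c}\|_2^2 = \|\widehat{\bm W}_{\mathcal S^c}\|_{\mathrm F}^2 \le D_1^2/(S-s)$, which (for $S$ chosen much larger than the sparsity $s$) produces the $\frac{r}{S}$ term. Collecting the factor $r$ with these two estimates gives exactly $C r\bigl(\tfrac1S + \tfrac{\log^3(p/(n\delta))}{n}\bigr)D_1^2$.

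The main obstacle is the uniform quadratic-form bound. Because $\widehat{\bm w}_j$ depends on the data, a fixed-$\bm v$ Bernstein estimate does not suffice; I must control $\sup_{\bm v}\frac1n\|\bm X\bm v\|_2^2$ simultaneously against $\|\bm v\|_2^2$ and $\|\bm v\|_1^2$. The delicate point is obtaining a fluctuation that scales like $\log^3(\cdot)/n$ rather than the crude $\sqrt{\log p/n}$ that an entrywise bound on $\frac1n\bm X^{\top}\bm X-\bm I$ would give, which is too weak in the regime $n\gtrsim\log^4 p$. I would therefore use a peeling or covering argument over the $\ell_1$-ball combined with the sub-exponential tails of $\langle\bm v,\bm x_i\rangle^2$, with the cubic logarithmic power tracking the moments accumulated through chaining over the net. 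I expect this concentration step, rather than the deterministic reductions, to carry the technical weight of the lemma.
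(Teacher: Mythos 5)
Your proposal is correct and follows essentially the same route as the paper's proof: feasibility plus the triangle inequality for the $c_6\sigma^2$ term, ReLU contraction and Cauchy--Schwarz over the $r$ neurons for the leading factor $r$, the uniform bound $\frac1n\sum_i\langle\bm v,\bm x_i\rangle^2 \le C\bigl(\|\bm v\|_2^2 + \tfrac{\log^3(p/(n\delta))}{n}\|\bm v\|_1^2\bigr)$ (the paper's Lemma~\ref{lem:linear}), and the same two deterministic reductions from $\ell_1$-optimality and the top-$S$ truncation. The only divergence is how you would prove the concentration sub-lemma: rather than peeling or chaining, the paper invokes RIP of the Gaussian design for $s$-sparse vectors with $s\asymp n/\log(p/\delta)$ and splits $\bm v$ into magnitude-ordered blocks of size $s$, which is where the cubic logarithm actually arises ($\log^2 L$ from the block sum times the $\log(p/s)/n$ RIP sample complexity), a somewhat more elementary route than the one you anticipate carrying the technical weight.
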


By comparing the bounds given in Lemma~\ref{lem:lower} and~\ref{lem:upper}, one has
\begin{align*}
c_4\min\left\{\frac{1}{r}, D_2^2\right\} \le c_6\left(\frac{r}{S}+\frac{r\log^3 \frac{p}{n\delta}}{n}\right)D_1^2 + c_6\sigma^2,
\end{align*}
provided that
\[
n > c_5S^3r^4\log^4\frac{p}{\delta}.
\]
Let $\widehat{\mathcal{S}}^{\star}$ be the index set with cardinality $2s$ consisting of the support for $\bm W^{\star}$ and top entries of $\widehat{\bm W}$.
In addition, let $D_1^{\star}$ and $D_2^{\star}$ denote the $D_1$-distance and $D_2$-distance between $\left(\widehat{\bm W}_{\widehat{\mathcal{S}}^{\star}}, \widehat{\bm a}, \widehat{\bm b}\right)$ and $\left(\bm W^{\star}, \bm a^{\star}, \bm b^{\star}\right)$ in a similar way as~\eqref{eq:distance}.
\jie{Observing the fact that for $S \ge 2s$, one has $\mathcal{S}^{\star} \subset \widehat{\mathcal{S}}^{\star} \subset \mathcal{S}$, we have 
\begin{align*}
    \|\bm{w}_j - \bm{w}_j^{\star}\|_2 \ge \|\bm{w}_{j, \widehat{\mathcal{S}}^{\star}} - \bm{w}_j^{\star}\|_2 = \|\widehat{\bm{w}}_j - \bm{w}_j^{\star}\|_2,
\end{align*}
after some permutation, and then
\begin{align}
    D_2^{\star} \le D_2.\nonumber
\end{align}
In addition, after some permutation, we have $D_1^{\star} \ge \big\|\widehat{\bm W}_{\widehat{\mathcal{S}}^{\star}} - \bm{W}^{\star}\big\|_1 \ge \left\|\bm{W}^{\star}\right\|_1 - \big\|\widehat{\bm{W}}_{\mathcal{S}^{\star}}\big\|_1$ and $\left\|\bm{W}\right\|_1 \le \big\|\widehat{\bm{W}}\big\|_1 \le \left\|\bm{W}^{\star}\right\|_1$.
Then,
\begin{align}
D_1 \le D_1^{\star} + \left\|\widehat{\bm W}_{\mathcal{S}} - \widehat{\bm W}_{\widehat{\mathcal{S}}^{\star}}\right\|_1 \le D_1^{\star} + \big\|\widehat{\bm{W}}\big\|_1 - \big\|\widehat{\bm{W}}_{\mathcal{S}^{\star}}\big\|_1 \le 2D_1^{\star}.\nonumber
\end{align}
}
Combined with Lemma~\ref{lem:D1-D2} in Appendix~\ref{sec_tech}, the above results give
\begin{align*}
D_2^{\star} \le \frac{2c_6}{c_4}\sigma,
\end{align*}
provided that for some constant $c_7 > 0$
\[
n \ge c_5S^3\log^4\frac{p}{\delta} \qquad\text{with }S \ge c_7sr,
\]
such that
\[
c_6\left(\frac{r}{S}+\frac{r\log^3 \frac{p}{n\delta}}{n}\right)D_1^{\star 2} \le \frac{c_4}{8} D_2^{\star 2}.
\]
%
Then, we can conclude the proof since after appropriate permutation,
\[
\|\widehat{\bm W} - \bm W^{\star}\|_{\mathrm{F}} \le 2\|\widehat{\bm W}_{\widehat{\mathcal{S}}^{\star}} - \bm W^{\star}\|_{\mathrm{F}}.
\]

\section{Proof of Lemma~\ref{lem:lower} (lower bound)}

This can be seen from the following three properties.
\begin{itemize}
\item Consider the case that $$D_1 \le \epsilon = \frac{\delta}{4nr}\sqrt{\frac{\pi}{\log\frac{4pn}{\delta}}}.$$ 
With probability at least $1 - \delta$,
\begin{align}
&\frac{1}{n} \sum_{i = 1}^n\biggl[\sum_{j = 1}^r a_j\mathrm{relu}(\bm w_j^{\top}\bm x_i + b_j) \nonumber\\
&\hspace{1cm}- \sum_{j = 1}^r a_j^{\star}\mathrm{relu}(\bm w_j^{\star\top}\bm x_i + b_j^{\star})\biggr]^2 \nonumber\\
&= \frac{D_1^2}{\epsilon^2}\frac{1}{n} \sum_{i = 1}^n\biggl[\sum_{j = 1}^r a_j\mathrm{relu}(\widetilde{\bm w}_j^{\top}\bm x_i + \widetilde{b}_j) \nonumber\\
&\hspace{2cm} - \sum_{j = 1}^r a_j^{\star}\mathrm{relu}(\bm w_j^{\star\top}\bm x_i + b_j^{\star})\biggr]^2, \label{eq:local}
\end{align}
where $\widetilde{\bm w}_j = \bm w_j^{\star} + \frac{\epsilon}{D_1}\left(\bm w_j - \bm w_j^{\star}\right)$ and $\widetilde{b}_j = b_j^{\star} + \frac{\epsilon}{D_1}\left(b_j - b_j^{\star}\right)$.
\item For any $\epsilon > 0$ and 
\[
D_1 \ge \frac{\epsilon}{\sqrt{\frac{S}{n}\log\frac{pr}{S}\log\frac{BS}{\epsilon\delta}}}, 
\]
there exists some universal constant $C_1 > 0$, such that with probability at least $1-\delta$,
\begin{align}
&\frac{1}{n} \sum_{i = 1}^n\biggl[
\sum_{j = 1}^r a_j\mathrm{relu}(\bm w_j^{\top}\bm x_i + b_j)  \nonumber \\
&\hspace{1cm} - \sum_{j = 1}^r a_j^{\star}\mathrm{relu}(\bm w_j^{\star\top}\bm x_i + b_j^{\star})\biggr]^2 \nonumber\\
&\ge \mathbb{E}\left[\sum_{j = 1}^r a_j\mathrm{relu}(\bm w_j^{\top}\bm x + b_j) - \sum_{j = 1}^r a_j^{\star}\mathrm{relu}(\bm w_j^{\star\top}\bm x + b_j^{\star})\right]^2 \nonumber\\
&\qquad- C_1D_1^2\log\frac{pn}{\delta}\sqrt{\frac{S}{n}\log\frac{pr}{S}\log\frac{BS}{\epsilon\delta}}.\label{eq:empirical}
\end{align}
\item For some universal constant $C_2 > 0$
\begin{align}
&\mathbb{E}\left[\sum_{j = 1}^r a_j\mathrm{relu}(\bm w_j^{\top}\bm x + b_j) - \sum_{j = 1}^r a_j^{\star}\mathrm{relu}(\bm w_j^{\star\top}\bm x + b_j^{\star})\right]^2 \nonumber \\
&\ge C_2\min\left\{\frac{1}{r}, D_2^2\right\}. \label{eq:population} 
\end{align}
\end{itemize}

\textit{Putting the above together}.
Let
\[
\epsilon = C_3\frac{\delta}{nr}\sqrt{\frac{S}{n}\log\frac{BnS}{\delta}},
\]
for some universal constant $C_3 > 0$ such that
\[
\frac{\epsilon}{\sqrt{\frac{S}{n}\log\frac{pr}{S}\log\frac{BS}{\epsilon\delta}}} < \frac{\delta}{4nr}\sqrt{\frac{\pi}{\log\frac{4pn}{\delta}}}.
\]
Inserting~\eqref{eq:population} into~\eqref{eq:empirical} gives that
\begin{align}
&\frac{1}{n} \sum_{i = 1}^n\left[\sum_{j = 1}^r a_j\mathrm{relu}(\bm w_j^{\top}\bm x_i + b_j) - \sum_{j = 1}^r a_j^{\star}\mathrm{relu}(\bm w_j^{\star\top}\bm x_i + b_j^{\star})\right]^2 \nonumber\\
&\ge C_2\min\left\{\frac{1}{r}, D_2^2\right\} - C_1D_1^2\log\frac{pn}{\delta}\sqrt{\frac{S}{n}\log\frac{pr}{S}\log\frac{BS}{\epsilon\delta}} \nonumber\\
&\ge \frac{C_2}{2}\min\left\{\frac{1}{r}, D_2^2\right\}, \label{eq:upper-large2}
\end{align}
holds with probability at least $1 - \delta$ provided that for some constant $C_4 > 0$
\begin{align*}
&n \ge C_4S^3r^4\log\frac{pr}{S}\log\frac{BS}{\epsilon\delta}\log^2\frac{pn}{\delta} \quad\text{and} \\
& D_1 \ge \frac{\delta}{4nr}\sqrt{\frac{\pi}{\log\frac{4pn}{\delta}}}. 
\end{align*}
Here, the last line holds due to Lemma~\ref{lem:D1-D2} and we assume that $\max\left\{\|\bm W\|_{\infty}, \|\bm b\|_{\infty}\right\}$ is bounded by some constant.
On the other hand, if $$D_1 < \frac{\delta}{4nr}\sqrt{\frac{\pi}{\log\frac{4pn}{\delta}}},$$ it follows from \eqref{eq:local} and \eqref{eq:upper-large2} that
\begin{align}
&\frac{1}{n} \sum_{i = 1}^n\left[\sum_{j = 1}^r a_j\mathrm{relu}(\bm w_j^{\top}\bm x_i + b_j) - \sum_{j = 1}^r a_j^{\star}\mathrm{relu}(\bm w_j^{\star\top}\bm x_i + b_j^{\star})\right]^2 \nonumber \\
&\ge \frac{D_1^2}{\epsilon^2}\frac{C_2}{2}\min\left\{\frac{1}{r}, \widetilde{D}_2^2\right\} = \frac{C_2}{2}D_2^2.\nonumber
\end{align}
Summing up, we conclude the proof by verifying \eqref{eq:local}, \eqref{eq:empirical}, and \eqref{eq:population} below.

\subsection{Proof of \eqref{eq:local}}

Since $D_1 \le \epsilon = \frac{\delta}{4nr}\sqrt{\frac{\pi}{\log\frac{4pn}{\delta}}}$, without loss of generality, we assume that $a_j = a_j^{\star}$ for $1 \le j \le r$, and 
\[
D_1 = \sum_{j = 1}^r \left(\|\bm w_j - \bm w_j^{\star}\|_1 + |b_j - b_j^{\star}|\right) \le \epsilon.
\]
By taking union bound, with probability at least $1 - \frac{\delta}{2}$, one has for all $1 \le i \le n$ and $1 \le j \le r$,
\[
\left|\bm w_j^{\star\top}\bm x_i + b_j^{\star}\right| > \frac{\delta}{2nr}\sqrt{\frac{\pi}{2}},
\]
since $\|\bm w_j^{\star}\|_2 \ge 1$ and $\bm x_i \sim \mathcal{N}(\zero, \Ind)$.
In addition, for all $1 \le i \le n$ and $1 \le j \le r$, 
\begin{align*}
\left|\bm w_j^{\top}\bm x_i + b_j - \bm w_j^{\star\top}\bm x_i - b_j^{\star}\right| 
&\le \|\bm w_j - \bm w_j^{\star}\|_1\|\bm x_i\|_{\infty} + |b_j - b_j^{\star}| \\
&\le \epsilon\sqrt{2\log\frac{4pn}{\delta}}
\end{align*}
holds with probability at least $1 - \frac{\delta}{2}$.
Here, the last inequality comes from the fact that with probability at least $1-\frac{\delta}{2}$, 
\begin{align}
\|\bm x_i\|_{\infty} \le \sqrt{2\log\frac{4pn}{\delta}}\qquad\text{for all } 1 \le i \le n. \label{eq:x_inf}
\end{align}
Putting together, we have with probability at least $1 - \delta$,
\begin{align} \label{eq:sign}
u(\bm w_j^{\top}\bm x_i + b_j) = u(\bm w_j^{\star\top}\bm x_i + b_j^{\star}),
\end{align}
provided that $$\epsilon \le \frac{\delta}{4nr}\sqrt{\frac{\pi}{\log\frac{4pn}{\delta}}}.$$
Note that $u(x) = 1$ if $x > 0$, and $u(x) = 0$ if $x \le 0$.
Then combining with the definition of $\widetilde{\bm w}_j$ and $\widetilde{b}_j$, the above property yields
\begin{align*}
&\frac{1}{n} \sum_{i = 1}^n\left[\sum_{j = 1}^r a_j\mathrm{relu}(\bm w_j^{\top}\bm x_i + b_j) - \sum_{j = 1}^r a_j^{\star}\mathrm{relu}(\bm w_j^{\star\top}\bm x_i + b_j^{\star})\right]^2 \\
&= \frac{1}{n} \sum_{i = 1}^n\left[\sum_{j = 1}^r a_j^{\star}u(\bm w_j^{\star\top}\bm x_i + b_j^{\star})(\bm w_j^{\top}\bm x_i + b_j - \bm w_j^{\star\top}\bm x_i - b_j^{\star})\right]^2 \\
&= \frac{D_1^2}{\epsilon^2}\frac{1}{n} \sum_{i = 1}^n\biggl[\sum_{j = 1}^r a_j^{\star}u(\bm w_j^{\star\top}\bm x_i + b_j^{\star}) \\
&\hspace{2cm} \times (\widetilde{\bm w}_j^{\top}\bm x_i + \widetilde{b}_j - \bm w_j^{\star\top}\bm x_i - b_j^{\star})\biggr]^2 \\
&= \frac{D_1^2}{\epsilon^2}\frac{1}{n} \sum_{i = 1}^n\biggl[\sum_{j = 1}^r a_j\mathrm{relu}(\widetilde{\bm w}_j^{\top}\bm x_i + \widetilde{b}_j) \\
&\hspace{2cm}  - \sum_{j = 1}^r a_j^{\star}\mathrm{relu}(\bm w_j^{\star\top}\bm x_i + b_j^{\star})\biggr]^2, 
\end{align*}
and the claim is proved.
\jie{Here, the last equality holds due to~\eqref{eq:sign} and $a_j = a_j^{\star}$ for $j = 1, \ldots, r$.}

\subsection{Proof of \eqref{eq:empirical}}

Notice that
\begin{align*}
&\left|a_j\mathrm{relu}(\bm w_j^{\top}\bm x + b_j) - a_j^{\star}\mathrm{relu}(\bm w_j^{\star\top}\bm x_i + b_j^{\star})\right|  \\
&\le \left\{
\begin{array}{ll}
\|\bm w_j - \bm w_j^{\star}\|_1\|\bm x\|_{\infty} + |b_j - b_j^{\star}| & \text{if } a_j = a_j^{\star}, \\
\left(\|\bm w_j\|_1 + \|\bm w_j^{\star}\|_1\right)\|\bm x\|_{\infty} + |b_j| + |b_j^{\star}| & \text{if } a_j \ne a_j^{\star},
\end{array}
\right.
\end{align*}
which leads to
\begin{align}
&\left|\sum_{j = 1}^r a_j\mathrm{relu}(\bm w_j^{\top}\bm x + b_j) - \sum_{j = 1}^r a_j^{\star}\mathrm{relu}(\bm w_j^{\star\top}\bm x + b_j^{\star})\right| \nonumber \\
& \le D_1\max\left\{\|\bm x\|_{\infty}, 1\right\}. \label{eq:crude-bound}
\end{align}
For any fixed $\left(\bm W, \bm a, \bm b\right)$, let 
\[
z_i \defn \sum_{j = 1}^r a_j\mathrm{relu}(\bm w_j^{\top}\bm x_i + b_j) - \sum_{j = 1}^r a_j^{\star}\mathrm{relu}(\bm w_j^{\star\top}\bm x_i + b_j^{\star}),
\]
and define the following event set
\[
\mathcal{E} \defn \left\{\|\bm x_i\|_{\infty} \le \sqrt{2\log\frac{4pn}{\delta}}\qquad\text{for all } 1 \le i \le n\right\}.
\]
Then, with probability at least $1-\delta$,
\begin{align}
&\frac{1}{n} \sum_{i = 1}^n \left(z_i^2 - \mathbb{E}\left[z_i^2\right]\right) \\
&= \frac{1}{n} \sum_{i = 1}^n \biggl\{z_i^2\ind(\mathcal{E}) - \mathbb{E}\left[z_i^2\ind(\mathcal{E})\right] 
 - \mathbb{E}\bigl[z_i^2\ind(\overline{\mathcal{E}})\bigr]\biggr\} \nonumber\\
&\ge -4D_1^2\log\frac{4pn}{\delta}\sqrt{\frac{1}{n}\log\frac{2}{\delta}} - D_1^2\frac{\delta}{n} \nonumber \\
&\ge -5D_1^2\log\frac{4pn}{\delta}\sqrt{\frac{1}{n}\log\frac{2}{\delta}}. \label{eq:random-error-fixed}
\end{align}
Here, the first line holds due to~\eqref{eq:x_inf};
the last line comes from Hoeffding's inequality, and the fact that
\begin{align*}
\left|\mathbb{E}\left[z_i^2\ind(\overline{\mathcal{E}})\right]\right| &\le D_1^2\left|\mathbb{E}\left[\|\bm x_i\|_{\infty}^2\ind(\|\bm x_i\|_{\infty} > \sqrt{2\log\frac{4pn}{\delta}})\right]\right| \\
&\le D_1^2\int_{\sqrt{2\log\frac{4pn}{\delta}}}^{\infty} x^2d \mathbb{P}(\|\bm x_i\|_{\infty} < x) \\
&\le D_1^2\int_{\sqrt{2\log\frac{4pn}{\delta}}}^{\infty} 4xp\exp(-\frac{x^2}{2})dx \le D_1^2\frac{\delta}{n}.
\end{align*}

In addition, consider the following $\epsilon$-net 
\begin{align}
\mathcal{N}_{\epsilon} \defn \biggl\{ &\left(\bm W, \bm a, \bm b\right) : \left|W_{ij}\right| \in \frac{\epsilon}{r+S}\left[\big\lceil\frac{B(r+S)}{\epsilon}\big\rceil\right], \nonumber \\
& \left\|\bm W\right\|_0 \le S,  \left|b_j\right| \in \frac{\epsilon}{r+S}\left[\big\lceil\frac{B(r+S)}{\epsilon}\big\rceil\right], \left|a_j\right| = 1\biggr\},\nonumber 
\end{align}
where $[n] \defn \left\{1, 2, \ldots, n-1\right\}$.
Then, for all $\left(\bm W, \bm a, \bm b\right)$ with $\|\bm W\|_1 \le B$ and $\|\bm b\|_1 \le B$, there exists some point, denoted by $\left(\widetilde{\bm W}, \widetilde{\bm a}, \widetilde{\bm b}\right)$, in $\mathcal{N}_{\epsilon}$ whose $D_1$-distance from $\left(\bm W, \bm a, \bm b\right)$ is less than $\epsilon$.
For simplicity, define
\begin{align*}
z_i &\defn \sum_{j = 1}^r a_j\mathrm{relu}(\bm w_j^{\top}\bm x_i + b_j) - \sum_{j = 1}^r a_j^{\star}\mathrm{relu}(\bm w_j^{\star\top}\bm x_i + b_j^{\star}), \\
\widetilde{z}_i &\defn \sum_{j = 1}^r \widetilde{a}_j\mathrm{relu}(\widetilde{\bm w}_j^{\top}\bm x_i + \widetilde{b}_j) - \sum_{j = 1}^r a_j^{\star}\mathrm{relu}(\bm w_j^{\star\top}\bm x_i + b_j^{\star}).
\end{align*}
Similar to~\eqref{eq:crude-bound}, we can derive that
\begin{align*}
&\left|\sum_{j = 1}^r a_j\mathrm{relu}(\bm w_j^{\top}\bm x + b_j) - \sum_{j = 1}^r \widetilde{a}_j\mathrm{relu}(\widetilde{\bm w}_j^{\top}\bm x + \widetilde{b}_j)\right| \\
& \le \epsilon\max\left\{\|\bm x\|_{\infty}, 1\right\}, 
\end{align*}
which implies
\[
\left|z_i^2 - \widetilde{z}_i^2\right| \le \epsilon\left(\epsilon+D_1\right)\max\left\{\|\bm x_i\|_{\infty}^2, 1\right\},
\]
and then with probability at least $1-\delta$,
\begin{align}
&\frac{1}{n} \sum_{i = 1}^n \left(z_i^2 - \mathbb{E}\left[z_i^2\right]\right) - \frac{1}{n} \sum_{i = 1}^n \left(\widetilde{z}_i^2 - \mathbb{E}\left[\widetilde{z}_i^2\right]\right)  \nonumber \\
&\ge - 4\epsilon\left(\epsilon+D_1\right)\log\frac{4pn}{\delta}. \label{eq:net-approx}
\end{align}

In addition, it can be verified that
\begin{align}
\log\left|\mathcal{N}_{\epsilon}\right| \le C_5S\log\frac{pr}{S}\log\frac{BS}{\epsilon}, \label{eq:net-size}
\end{align}
for some universal constant $C_5 > 0$.
Combining \eqref{eq:random-error-fixed}, \eqref{eq:net-approx}, and \eqref{eq:net-size} leads to
\begin{align*}
\frac{1}{n} \sum_{i = 1}^n \left(z_i^2 - \mathbb{E}\left[z_i^2\right]\right) \ge 
&-5\left(\epsilon+D_1\right)^2\log\frac{4pn}{\delta}\sqrt{\frac{1}{n}\log\frac{2\left|\mathcal{N}_{\epsilon}\right|}{\delta}} \\
&-4\epsilon\left(\epsilon+D_1\right)\log\frac{4pn}{\delta}.
\end{align*}
It follows that \eqref{eq:empirical} holds.

\subsection{Proof of \eqref{eq:population}}

We first consider a simple case that $b_j = 0$ and $b_j^{\star} = 0$ for $1 \le j \le r$, and show that for some small constant $C_6 > 0$,
\begin{align}
&\mathbb{E}\left[\sum_{j = 1}^r a_j\mathrm{relu}(\bm w_j^{\top}\bm x) - \sum_{j = 1}^r a_j^{\star}\mathrm{relu}(\bm w_j^{\star\top}\bm x)\right]^2 \nonumber \\
&\ge C_6\min\left\{\frac{1}{r}, D_2^2\right\}. \label{eq:E-bound}
\end{align}
\jie{Next, we will assume that
\[
\mathbb{E}\left[\sum_{j = 1}^r a_j\mathrm{relu}(\bm w_j^{\top}\bm x) - \sum_{j = 1}^r a_j^{\star}\mathrm{relu}(\bm w_j^{\star\top}\bm x)\right]^2 \le \frac{C_6}{r}.
\]
Otherwise, Inequality~\ref{eq:E-bound} already holds.}
According to Lemma~\ref{lem:E-tensor}, one has for any constant $k \ge 0$, there exists some constant $\alpha_k > 0$ such that
\begin{align}
&\mathbb{E}\left[\sum_{j = 1}^r a_j\mathrm{relu}(\bm w_j^{\top}\bm x) - \sum_{j = 1}^r a_j^{\star}\mathrm{relu}(\bm w_j^{\star\top}\bm x)\right]^2 \nonumber\\
&\ge \alpha_k\biggl\|\sum_{j = 1}^r a_j\|\bm w_j\|_2\big(\frac{\bm w_j}{\|\bm w_j\|_2}\big)^{\otimes 2k} \nonumber\\
&\hspace{1cm} - \sum_{j = 1}^r a_j^{\star}\|\bm w_j^{\star}\|_2\big(\frac{\bm w_j^{\star}}{\|\bm w_j^{\star}\|_2}\big)^{\otimes 2k}\biggr\|_F^2. \label{eq:E-tensor-nobias}
\end{align}

Assumption~\ref{assumption2} tells us that for any integer $k \ge \frac{2}{\omega}$,
\begin{align}
\left|\langle \bm v_{j_1}^{\star}, \bm v_{j_2}^{\star} \rangle\right| \le \frac{1}{r^2}. \label{eq:perp-v}
\end{align}
where
\[
\bm v_j \defn \mathrm{vec}\left(\big(\frac{\bm w_j}{\|\bm w_j\|_2}\big)^{\otimes k}\right) \quad\text{with}\quad \beta_j \defn a_j\|\bm w_j\|_2,
\]
and
\[
\bm v_j^{\star} \defn \mathrm{vec}\left(\big(\frac{\bm w_j^{\star}}{\|\bm w_j^{\star}\|_2}\big)^{\otimes k}\right)\quad\text{with}\quad \beta_j^{\star} \defn a_j^{\star}\|\bm w_j^{\star}\|_2.
\]
Then, \eqref{eq:E-tensor-nobias} gives
\begin{align*}
&\mathbb{E}\left[\sum_{j = 1}^r a_j\mathrm{relu}(\bm w_j^{\top}\bm x) - \sum_{j = 1}^r a_j^{\star}\mathrm{relu}(\bm w_j^{\star\top}\bm x)\right]^2 \\
& \ge \alpha_{3k}\left\|\sum_{j = 1}^r \beta_j\bm v_j^{\otimes 6} - \sum_{j = 1}^r \beta_j^{\star}\bm v_j^{\star\otimes 6}\right\|_F^2.
\end{align*}
Define 
\begin{align*}
\mathbb{S}_{+} \defn \mathrm{span}\left\{\bm v_j\right\}_{j : \beta_j > 0}\quad\text{ }\quad \mathbb{S}_{-} \defn \mathrm{span}\left\{\bm v_j\right\}_{j : \beta_j < 0},
\end{align*}
and
\begin{align*}
\mathbb{S}_{+}^{\star} \defn \mathrm{span}\left\{\bm v_j^{\star}\right\}_{j : \beta_j^{\star} > 0}\quad\text{ }\quad \mathbb{S}_{-}^{\star} \defn \mathrm{span}\left\{\bm v_j^{\star}\right\}_{j : \beta_j^{\star} < 0}.
\end{align*}
Let $\bm P_{\mathbb{S}}$ and $\bm P_{\mathbb{S}}^{\perp}$ denote the projection onto and perpendicular to the subspace $\mathbb{S}$, respectively. 
By noticing that $\bm P_{\mathbb{S}_{-}}^{\perp}\bm v_j = \bm 0$ for $j$ obeying $\beta_j < 0$, and $\bm P_{\mathbb{S}_{+}^{\star}}^{\perp}\bm v_j^{\star} = \bm 0$ for $j$ obeying $\beta_j^{\star} > 0$, one has
\begin{align*}
&\left\|\sum_{j = 1}^r \beta_j\bm v_j^{\otimes 6} - \sum_{j = 1}^r \beta_j^{\star}\bm v_j^{\star\otimes 6}\right\|_F^2 \\
&\ge \biggl\|\sum_{j : \beta_j > 0} \beta_j\big(\bm P_{\mathbb{S}_{-}}^{\perp}\bm v_j\big)^{\otimes 2} \otimes \big(\bm P_{\mathbb{S}_{+}^{\star}}^{\perp}\bm v_j\big)^{\otimes 4} \\
&\hspace{1cm} - \sum_{j : \beta_j^{\star} < 0} \beta_j^{\star}\big(\bm P_{\mathbb{S}_{-}}^{\perp}\bm v_j^{\star}\big)^{\otimes 2} \otimes \big(\bm P_{\mathbb{S}_{+}^{\star}}^{\perp}\bm v_j^{\star}\big)^{\otimes 4}\biggr\|_F^2 \\
&\ge \sum_{j : \beta_j^{\star} < 0} \left\|\beta_j^{\star}\big(\bm P_{\mathbb{S}_{-}}^{\perp}\bm v_j^{\star}\big)^{\otimes 2} \otimes \big(\bm P_{\mathbb{S}_{+}^{\star}}^{\perp}\bm v_j^{\star}\big)^{\otimes 4}\right\|_F^2 \\
&\ge \frac{1}{2}\sum_{j : \beta_j^{\star} < 0} \left\|\bm P_{\mathbb{S}_{-}}^{\perp}\bm v_j^{\star}\right\|_2^4,
\end{align*}
where the penultimate inequality holds since the inner product between every pair of terms is positive, and the last inequality comes from the facts that $|\beta_j^{\star}| \ge 1$ and \eqref{eq:perp-v}.

Moreover, by means of AM-GM inequality and \eqref{eq:perp-v}, one can see that
\begin{align*}
\sum_{j : \beta_j^{\star} < 0} \left\|\bm P_{\mathbb{S}_{-}}^{\perp}\bm v_j^{\star}\right\|_2^4 
&\ge \frac{1}{r}\Big(\sum_{j : \beta_j^{\star} < 0} \left\|\bm P_{\mathbb{S}_{-}}^{\perp}\bm v_j^{\star}\right\|_2^2\Big)^2 \\
&= \frac{1}{r}\left\|\bm P_{\mathbb{S}_{-}}^{\perp} \big[\bm v_j^{\star}\big]_{j : \beta_j^{\star} < 0}\right\|_F^4 \\
&\ge \frac{1}{2r}\left\|\bm P_{\mathbb{S}_{-}}^{\perp} \bm P_{\mathbb{S}_{-}^{\star}}\right\|_F^4.
\end{align*}
Then combining with \eqref{eq:E-bound}, the above result and the counterpart for $\beta_j^{\star} > 0$ lead to
\begin{align*}
\mathrm{dim}(\mathbb{S}_{-}) \ge \mathrm{dim}(\mathbb{S}_{-}^{\star})\quad\text{and}\quad \mathrm{dim}(\mathbb{S}_{+}) \ge \mathrm{dim}(\mathbb{S}_{+}^{\star}),
\end{align*}
which gives
\begin{align*}
\mathrm{dim}(\mathbb{S}_{-}) = \mathrm{dim}(\mathbb{S}_{-}^{\star})\quad\text{and}\quad \mathrm{dim}(\mathbb{S}_{+}) = \mathrm{dim}(\mathbb{S}_{+}^{\star}).
\end{align*}
Furthermore, for some small constant $C_6 > 0$, we have
\begin{align*}
\mathrm{dist}(\mathbb{S}_{-}, \mathbb{S}_{-}^{\star}) \le C_6 \quad\text{and}\quad \mathrm{dist}(\mathbb{S}_{+}, \mathbb{S}_{+}^{\star}) \le C_6.
\end{align*}

Let $\bm P_{i}^{\perp}$ denote the projection perpendicular to 
\[
\mathrm{span}\left\{\bm v_j^{\star}\right\}_{j \ne i : \beta_j^{\star} > 0},
\]
and
\[
\gamma_j \defn \frac{\beta_j\langle \bm P_{\mathbb{S}_{-}}^{\perp}\bm v_j, P_{\mathbb{S}_{-}}^{\perp}\bm v_i^{\star}\rangle^2\langle \bm P_{i}^{\perp}\bm v_i, P_{\mathbb{S}_{-}}^{\perp}\bm v_i^{\star}\rangle^2}{\big\|\bm P_{\mathbb{S}_{-}}^{\perp}\bm v_i^{\star}\big\|_2^{2}\big\|\bm P_{i}^{\perp}\bm v_i^{\star}\big\|^{2}}.
\]
Then for any $i$,
\begin{align*}
&\left\|\sum_{j = 1}^r \beta_j\bm v_j^{\otimes 6} - \sum_{j = 1}^r \beta_j^{\star}\bm v_j^{\star\otimes 6}\right\|_F^2 \\
&\ge \biggl\|\sum_{j : \beta_j > 0} \beta_j\big(\bm P_{\mathbb{S}_{-}}^{\perp}\bm v_j\big)^{\otimes 2} \otimes \bm v_j^{\otimes 4} \\
&\hspace{1cm} - \sum_{j = 1}^r \beta_j^{\star}\big(\bm P_{\mathbb{S}_{-}}^{\perp}\bm v_j^{\star}\big)^{\otimes 2} \otimes \bm v_j^{\star\otimes 4}\biggr\|_F^2 \\
&\ge \frac{1}{2}\biggl\|\sum_{j : \beta_j > 0} \beta_j\big(\bm P_{\mathbb{S}_{-}}^{\perp}\bm v_j\big)^{\otimes 2} \otimes \bm v_j^{\otimes 4} \\
&\hspace{1cm} - \sum_{j : \beta_j^{\star} > 0} \beta_j^{\star}\big(\bm P_{\mathbb{S}_{-}}^{\perp}\bm v_j^{\star}\big)^{\otimes 2} \otimes \bm v_j^{\star\otimes 4}\biggr\|_F^2 \\
&\ge \frac{1}{2}\biggl\|\sum_{j : \beta_j > 0} \beta_j\big(\bm P_{\mathbb{S}_{-}}^{\perp}\bm v_j\big)^{\otimes 2} \otimes \big(\bm P_{i}^{\perp}\bm v_i\big)^{\otimes 2} \otimes \bm v_j^{\otimes 2} \\
&\hspace{1cm}- \beta_i^{\star}\big(\bm P_{\mathbb{S}_{-}}^{\perp}\bm v_i^{\star}\big)^{\otimes 2} \otimes \big(\bm P_{i}^{\perp}\bm v_i^{\star}\big)^{\otimes 2} \otimes \bm v_i^{\star\otimes 2}\biggr\|_F^2 \\
&\ge \frac{1}{2}\left\|\sum_{j : \beta_j > 0} \gamma_j\bm v_j^{\otimes 2} - \beta_i^{\star}\big\|\bm P_{\mathbb{S}_{-}}^{\perp}\bm v_i^{\star}\big\|_2^{2}\big\|\bm P_{i}^{\perp}\bm v_i^{\star}\big\|^{2} \bm v_i^{\star\otimes 2}\right\|_F^2,
\end{align*}
which, together with \eqref{eq:E-bound}, implies that there exists some $j$ such that
\begin{align*}
\|\sqrt{\beta_j}\bm v_j - \sqrt{\beta_i^{\star}}\bm v_i^{\star}\|_2^2 \le \frac{1}{r}.
\end{align*}

Without loss of generality, assume that
\begin{align}
\|\sqrt{\beta_j}\bm v_j - \sqrt{\beta_j^{\star}}\bm v_j^{\star}\|_2^2 \le \frac{1}{r}, \qquad \text{for all }1 \le j \le r.
\end{align}
Then
\begin{align*}
&\mathbb{E}\big[\sum_{j = 1}^r a_j\mathrm{relu}(\bm w_j^{\top}\bm x) - \sum_{j = 1}^r a_j^{\star}\mathrm{relu}(\bm w_j^{\star\top}\bm x)\big]^2 \\
&\ge \alpha_k\left\|\sum_{j = 1}^r \beta_j\bm v_j\bm v_j^{\top} - \sum_{j = 1}^r \beta_j^{\star}\bm v_j^{\star}\bm v_j^{\star\top}\right\|_F^2 \\
&\ge \alpha_k\sum_{j = 1}^r \left\|\beta_j\bm v_j\bm v_j^{\top} - \beta_j^{\star}\bm v_j^{\star}\bm v_j^{\star\top}\right\|_F^2 \\
&\quad - \frac{\alpha_k}{2r}\left(\sum_{j = 1}^r \left\|\beta_j\bm v_j\bm v_j^{\top} - \beta_j^{\star}\bm v_j^{\star}\bm v_j^{\star\top}\right\|_F\right)^2 \\
&\ge \frac{\alpha_k}{2}\sum_{j = 1}^r \left\|\beta_j\bm v_j\bm v_j^{\top} - \beta_j^{\star}\bm v_j^{\star}\bm v_j^{\star\top}\right\|_F^2.
\end{align*}
Here, the first line comes from \eqref{eq:E-tensor-nobias};
the second line holds through the following claim 
\begin{align*}
&\left|\langle \beta_{j_1}\bm v_{j_1}\bm v_{j_1}^{\top} - \beta_{j_1}^{\star}\bm v_{j_1}^{\star}\bm v_{j_1}^{\star\top}, \beta_{j_2}\bm v_{j_2}\bm v_{j_2}^{\top} - \beta_{j_2}^{\star}\bm v_{j_2}^{\star}\bm v_{j_2}^{\star\top}\rangle\right| \\
&\le \frac{1}{2r}\|\beta_{j_1}\bm v_{j_1}\bm v_{j_1}^{\top} - \beta_{j_1}^{\star}\bm v_{j_1}^{\star}\bm v_{j_1}^{\star\top}\|_2\|\beta_{j_2}\bm v_{j_2}\bm v_{j_2}^{\top} - \beta_{j_2}^{\star}\bm v_{j_2}^{\star}\bm v_{j_2}^{\star\top}\|_2
\end{align*}
since for $\bm \delta_j \defn \sqrt{\beta_j}\bm v_j - \sqrt{\beta_j^{\star}}\bm v_j^{\star}$,
\begin{align*}
\beta_j\bm v_j\bm v_j^{\top} - \beta_j^{\star}\bm v_j^{\star}\bm v_j^{\star\top} = \bm \delta_j\bm \delta_j^{\top} + \sqrt{\beta_j^{\star}}\bm \delta_j\bm v_j^{\star\top} + \sqrt{\beta_j^{\star}}\bm v_j^{\star}\bm \delta_j^{\star\top}.
\end{align*}
Then the conclusion is obvious by noticing that
\begin{align*}
\left\|\beta_j\bm v_j\bm v_j^{\top} - \beta_j^{\star}\bm v_j^{\star}\bm v_j^{\star\top}\right\|_F \ge \|\bm w_j - \bm w_j^{\star}\|_2.
\end{align*}

Finally, we analyze the general case with $b_j, b_j^{\star} \ne 0$, which is similar to the above argument.
For simplicity, we only explain the different parts here.
According to Lemma~\ref{lem:E-tensor}, one has for any constant $k \ge 0$, there exists some constant $\alpha_k > 0$ and some function $f_k : \mathbb{R} \rightarrow \mathbb{R}$ such that
\begin{align}
&\mathbb{E}\left[\sum_{j = 1}^r a_j\mathrm{relu}(\bm w_j^{\top}\bm x + b_j) - \sum_{j = 1}^r a_j^{\star}\mathrm{relu}(\bm w_j^{\star\top}\bm x + b_j^{\star})\right]^2 \nonumber\\
&\ge \sum_{k \ge \frac{12}{\omega}}^{\infty}\biggl\|\sum_{j = 1}^r a_jf_k(\frac{b_j}{\|\bm w_j\|_2})\|\bm w_j\|_2\big(\frac{\bm w_j}{\|\bm w_j\|_2}\big)^{\otimes k} \nonumber  \\
&\hspace{1.5cm}- \sum_{j = 1}^r a_j^{\star}f_k(\frac{b_j^{\star}}{\|\bm w_j^{\star}\|_2})\|\bm w_j^{\star}\|_2\big(\frac{\bm w_j^{\star}}{\|\bm w_j^{\star}\|_2}\big)^{\otimes k}\biggr\|_F^2 \nonumber\\
&\gtrsim \sum_{j = 1}^r \sum_{k \ge \frac{12}{\omega}}^{\infty}\left\|a_jf_k(\frac{b_j}{\|\bm w_j\|_2})\bm w_j - a_j^{\star}f_k(\frac{b_j^{\star}}{\|\bm w_j^{\star}\|_2})\bm w_j^{\star}\right\|_F^2 \nonumber\\
&\gtrsim \sum_{j = 1}^r \inf_{R_l(\bm x)} \mathbb{E}\biggl[a_j\mathrm{relu}(\bm w_j^{\top}\bm x + b_j) \\
&\hspace{2.2cm} - a_j^{\star}\mathrm{relu}(\bm w_j^{\star\top}\bm x + b_j^{\star}) - R_l(\bm x)\biggr]^2 \nonumber\\
&\gtrsim \sum_{j = 1}^r \left(\|\bm w_j - \bm w_j^{\star}\|_2^2 + |b_j - b_j^{\star}|^2\right) \nonumber. 
\end{align}
Here, $l = \left[\frac{12}{\omega}\right]$, and the second inequality holds in a similar way to above analysis. 
Then the general conclusion is handy.

\section{Proof of Lemma~\ref{lem:upper} (upper bound)}

For simplicity, let
\begin{align*}
z_i &\defn \sum_{j = 1}^r a_j\mathrm{relu}(\bm w_j^{\top}\bm x_i + b_j) - \sum_{j = 1}^r a_j^{\star}\mathrm{relu}(\bm w_j^{\star\top}\bm x_i + b_j^{\star}), \\
\widehat{z}_i &\defn \sum_{j = 1}^r a_j\mathrm{relu}(\bm w_j^{\top}\bm x_i + b_j) - \sum_{j = 1}^r \widehat{a}_j\mathrm{relu}(\widehat{\bm w}_j^{\top}\bm x_i + \widehat{b}_j).
\end{align*}
Recall the optimality of $\left(\widehat{\bm W}, \widehat{\bm a}, \widehat{\bm b}\right)$ with respect to the problem in~\ref{eq:l1}.
According to the triangle inequality, one has
\begin{align}
\sqrt{\frac{1}{n} \sum_{i = 1}^n z_i^2} \le \sqrt{\frac{1}{n} \sum_{i = 1}^n \widehat{z}_i^2} + 2\sigma.
\end{align}
We can bound the first term in the right hand side by
\begin{align*}
&\frac{1}{n} \sum_{i = 1}^n \widehat{z}_i^2\\
&= \frac{1}{n}\sum_{i = 1}^n \left[\sum_{j = 1}^r a_j\left(\mathrm{relu}(\bm w_j^{\top}\bm x_i + b_j) - \mathrm{relu}(\widehat{\bm w}_j^{\top}\bm x_i + \widehat{b}_j)\right)\right]^2 \\
&\leq \frac{1}{n}\sum_{i = 1}^n \left[\sum_{j = 1}^r \left|(\bm w_j-\widehat{\bm w}_j)^{\top}\bm x_i\right|\right]^2 \\
&\le \frac{r}{n}\sum_{i = 1}^n \sum_{j = 1}^r \left|(\bm w_j-\widehat{\bm w}_j)^{\top}\bm x_i\right|^2,
\end{align*}
where the second line holds due to the contraction property of ReLu function,
and the last line comes from the AM-GM inequality. 
Lemma~\ref{lem:linear} further gives for some constant $C_7 > 0$,
\begin{align*}
\sum_{j = 1}^r \frac{1}{n}\sum_{i = 1}^n \left|(\bm w_j-\widehat{\bm w}_j)^{\top}\bm x_i\right|^2 \le &C_7\sum_{j = 1}^r \left\|\bm w_j-\widehat{\bm w}_j\right\|_2^2 \\
&+ C_7\frac{\log^3 \frac{p}{n\delta}}{n} \sum_{j = 1}^r \left\|\bm w_j-\widehat{\bm w}_j\right\|_1^2
\end{align*}
holds with probability at least $1 - \delta$.
In addition,
\begin{align*}
\sum_{j = 1}^r \left\|\bm w_j-\widehat{\bm w}_j\right\|_1^2 
&\le \left\|\bm W-\widehat{\bm W}\right\|_1^2 \\
& \le \left(\|\bm W^{\star}\|_1-\|\widehat{\bm W}\|_1\right)^2 \le D_1^2,
\end{align*}
and
\begin{align*}
\sum_{j = 1}^r \left\|\bm w_j-\widehat{\bm w}_j\right\|_2^2 &= \left\|\bm W-\widehat{\bm W}\right\|_1^2 \le \left\|\bm W-\widehat{\bm W}\right\|_1\left\|\bm W-\widehat{\bm W}\right\|_{\infty} \\
&\le \frac{\left(\|\bm W^{\star}\|_1-\|\widehat{\bm W}\|_1\right)\left(\|\bm W^{\star}\|_1-\|\widehat{\bm W}^{\star}\|_1\right)}{S/2} \\
&\le \frac{4}{S}D_1^2.
\end{align*}
Here, $\widehat{\bm W}^{\star}$ denote the entries of $\widehat{\bm W}$ on the support set for $\bm W^{\star}$, and we make use of the fact that $\|\widehat{\bm W}\|_1 \le \|\bm W^{\star}\|_1$ and
\begin{align*}
\left\|\bm W-\widehat{\bm W}\right\|_{\infty} \le \frac{\|\widehat{\bm W}^{\star}-\widehat{\bm W}\|_1}{S-s} \le \frac{\|\bm W^{\star}\|_1-\|\widehat{\bm W}^{\star}\|_1}{S/2}.
\end{align*}
Putting everything together gives the desired result.

\section{Technical lemmas} \label{sec_tech}


\begin{lemma} \label{lem:D1-D2}
For any $\left(\bm W, \bm a, \bm b\right)$ with $\left\|\bm W\right\|_0 + \left\|\bm b\right\|_0 + \left\|\bm W^{\star}\right\|_0 + \left\|\bm b^{\star}\right\|_0 \le S$. Assume that $\left\|\bm W\right\|_1 + \left\|\bm b\right\|_1 \le \left\|\bm W^{\star}\right\|_1 + \left\|\bm b^{\star}\right\|_1$ and $\|\bm w_j^{\star}\|_2^2 + |b_j^{\star}|^2 \le 1$.
Then one has
\begin{align}
D_1 \le 2\sqrt{S}D_2,
\end{align}
where $D_1, D_2$ are defined in~\eqref{eq:distance}.
\end{lemma}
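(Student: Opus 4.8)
The plan is to prove this as a sparse-vector analogue of the elementary inequality $\|\bm v\|_1 \le \sqrt{\|\bm v\|_0}\,\|\bm v\|_2$, carried out after fixing the optimal matching of neurons. First I would let $\pi$ be the permutation attaining the minimum in $D_2$ and relabel the columns of $\left(\bm W, \bm a, \bm b\right)$ so that $\pi$ becomes the identity; since $D_1$ is itself a minimum over permutations, $D_1 \le \sum_{j=1}^r d_1(\bm w_j, a_j, b_j, \bm w_j^{\star}, a_j^{\star}, b_j^{\star})$ for this particular matching, and it suffices to bound the right-hand side. I would then stack each weight column with its bias into augmented vectors $\bm u_j \defn (\bm w_j, b_j) \in \R^{p+1}$ and $\bm u_j^{\star} \defn (\bm w_j^{\star}, b_j^{\star})$, so that the three hypotheses read $\sum_j \big(\|\bm u_j\|_0 + \|\bm u_j^{\star}\|_0\big) \le S$, $\|\bm u_j^{\star}\|_2 \le 1$, and $\sum_j \|\bm u_j\|_1 \le \sum_j \|\bm u_j^{\star}\|_1$.

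Next I would split $[r]$ into the aligned set $A = \{j : a_j = a_j^{\star}\}$ and the misaligned set $B = \{j : a_j \ne a_j^{\star}\}$, so that, directly from the definitions of $d_1$ and $d_2$,
\begin{align*}
D_1 &\le \underbrace{\sum_{j\in A}\|\bm u_j - \bm u_j^{\star}\|_1}_{T_A} + \underbrace{\sum_{j\in B}\|\bm u_j\|_1}_{T_{B,1}} + \underbrace{\sum_{j\in B}\|\bm u_j^{\star}\|_1}_{T_{B,2}}, \\
D_2^2 &= \sum_{j\in A}\|\bm u_j - \bm u_j^{\star}\|_2^2 + |B|.
\end{align*}
The two ``easy'' terms fall to Cauchy--Schwarz. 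Writing $S_A \defn \sum_{j\in A}\|\bm u_j-\bm u_j^{\star}\|_0$ and $D_{2,A}^2 \defn \sum_{j\in A}\|\bm u_j-\bm u_j^{\star}\|_2^2$, the sparse $\ell_1$-versus-$\ell_2$ bound gives $T_A \le \sqrt{S_A}\,D_{2,A}$; similarly $\|\bm u_j^{\star}\|_2 \le 1$ yields $\|\bm u_j^{\star}\|_1 \le \sqrt{\|\bm u_j^{\star}\|_0}$, whence $T_{B,2} \le \sqrt{|B|}\,\sqrt{S_B^{\star}}$ with $S_B^{\star} \defn \sum_{j\in B}\|\bm u_j^{\star}\|_0$.

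The main obstacle is the middle term $T_{B,1}$, the $\ell_1$ mass that $\bm W$ places on the misaligned neurons: there is no per-column $\ell_2$ control on $\bm u_j$ to feed into Cauchy--Schwarz, so this term cannot be bounded locally. This is exactly where the global $\ell_1$ budget enters. I would rearrange $\sum_{j}\|\bm u_j\|_1 \le \sum_j\|\bm u_j^{\star}\|_1$ into $T_{B,1} \le \big(\sum_{j\in A}\|\bm u_j^{\star}\|_1 - \sum_{j\in A}\|\bm u_j\|_1\big) + T_{B,2}$, then apply the reverse triangle inequality termwise on $A$ to get $\sum_{j\in A}\|\bm u_j^{\star}\|_1 - \sum_{j\in A}\|\bm u_j\|_1 \le T_A$, which delivers the clean bound $T_{B,1} \le T_A + T_{B,2}$.

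Finally I would assemble the pieces: the three bounds give $D_1 \le 2T_A + 2T_{B,2} \le 2\sqrt{S_A}\,D_{2,A} + 2\sqrt{S_B^{\star}}\sqrt{|B|}$, and a two-term Cauchy--Schwarz, $\sqrt{S_A}\,D_{2,A} + \sqrt{S_B^{\star}}\sqrt{|B|} \le \sqrt{S_A + S_B^{\star}}\,\sqrt{D_{2,A}^2 + |B|}$, combined with $S_A + S_B^{\star} \le S$ (from the $\ell_0$ hypothesis) and $D_{2,A}^2 + |B| = D_2^2$, closes the argument at $D_1 \le 2\sqrt{S}\,D_2$. I expect the only genuinely delicate part to be the budget step of the previous paragraph — getting the sign right in the reverse triangle inequality and verifying that $S_A + S_B^{\star}$ truly does not exceed $S$ — while everything else is routine sparse-norm manipulation.
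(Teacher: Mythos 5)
Your proposal is correct and follows essentially the same route as the paper's proof: both rest on the sparse Cauchy--Schwarz bound $\|\cdot\|_1 \le \sqrt{\|\cdot\|_0}\,\|\cdot\|_2$ with total support budget $S$, plus the reverse-triangle-inequality argument that converts the global constraint $\|\bm W\|_1 + \|\bm b\|_1 \le \|\bm W^{\star}\|_1 + \|\bm b^{\star}\|_1$ into control of the $\ell_1$ mass on the sign-mismatched neurons, which is exactly where the factor $2$ comes from in both arguments. The only differences are bookkeeping (you apply Cauchy--Schwarz per group and recombine, while the paper applies it once to a mixed sum; your $A/B$ split omits the paper's extra distance condition in defining $\mathcal{J}$, which you don't need because you keep $d_2=1$ exactly for mismatched neurons), and your budget step $T_{B,1} \le T_A + T_{B,2}$ and the verification $S_A + S_B^{\star} \le S$ both check out.
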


\begin{proof}
For simplicity, assume that 
\[
D_2^2 = \sum_{j\in\mathcal{J}} \left(\|\bm w_j - \bm w_j^{\star}\|_2^2 + |b_j - b_j^{\star}|^2\right) + \sum_{j\notin\mathcal{J}} \left(\|\bm w_j^{\star}\|_2^2 + |b_j^{\star}|^2\right).
\]
Here, $j\in\mathcal{J}$ means that $a_j = a_j^{\star}$ and 
\[
\|\bm w_j - \bm w_j^{\star}\|_2^2 + |b_j - b_j^{\star}|^2 \le \|\bm w_j^{\star}\|_2^2 + |b_j^{\star}|^2.
\]
Then, according to the AM-GM inequality, one has
\begin{align*}
\sqrt{S}D_2 &\ge \sum_{j\in\mathcal{J}} \biggl(\|\bm w_j - \bm w_j^{\star}\|_1 + |b_j - b_j^{\star}|\biggr) \\
& \quad + \sum_{j\notin\mathcal{J}} \biggl(\|\bm w_j^{\star}\|_1 + |b_j^{\star}|\biggr) \\
&\ge \sum_{j\in\mathcal{J}} \biggl(\|\bm w_j^{\star}\|_1 - \|\bm w_j\|_1 + |b_j^{\star}| - |b_j|\biggr) + \|\bm W^{\star}\|_1 \\
&\quad + \|\bm b^{\star}\|_1 - \sum_{j\in\mathcal{J}} \biggl(\|\bm w_j^{\star}\|_1 + |b_j^{\star}|\biggr) \\
&\ge \sum_{j\notin\mathcal{J}} \biggl(\|\bm w_j\|_1 + |b_j|\biggr),
\end{align*}
which implies that
\begin{align*}
2\sqrt{S}D_2 \ge & \sum_{j\in\mathcal{J}} \left(\|\bm w_j - \bm w_j^{\star}\|_1 + |b_j - b_j^{\star}|\right) \\
&+ \sum_{j\notin\mathcal{J}} \left(\|\bm w_j^{\star}\|_1 + |b_j^{\star}| + \|\bm w_j\|_1 + |b_j|\right).
\end{align*}
Thus we conclude the proof.
\end{proof}

\jie{
\begin{lemma}[Theorem 2.1~\cite{ge2017learning}] \label{lem:E-tensor}
For any constant $k \ge 0$, there exists some universal function $f_k : \mathbb{R} \rightarrow \mathbb{R}$ such that
\begin{align}
&\mathbb{E}\left[\sum_{j = 1}^r a_j\mathrm{relu}(\bm w_j^{\top}\bm x + b_j) - \sum_{j = 1}^r a_j^{\star}\mathrm{relu}(\bm w_j^{\star\top}\bm x + b_j^{\star})\right]^2 \nonumber\\
&= \sum_{k = 0}^{\infty}\biggl\|\sum_{j = 1}^r a_jf_k\biggl(\frac{b_j}{\|\bm w_j\|_2}\biggr)\|\bm w_j\|_2\biggl(\frac{\bm w_j}{\|\bm w_j\|_2}\biggr)^{\otimes k} \nonumber\\
&\qquad \quad - \sum_{j = 1}^r a_j^{\star}f_k\biggl(\frac{b_j^{\star}}{\|\bm w_j^{\star}\|_2}\biggr)\|\bm w_j^{\star}\|_2\biggl(\frac{\bm w_j^{\star}}{\|\bm w_j^{\star}\|_2}\biggr)^{\otimes k}\biggr\|_F^2, \label{eq:E-tensor-general}
\end{align}
with
\begin{align}
\alpha_{k} \defn f_{2k}(0) > 0,\qquad\text{for all }k > 0.
\end{align}
In addition, we have
\begin{align}
&\inf_{R_l} \mathbb{E}\left[a\mathrm{relu}(\bm w^{\top}\bm x + b) - \sum_{j = 1}^r a^{\star}\mathrm{relu}(\bm w^{\star\top}\bm x + b^{\star}) - R_l(\bm x)\right]^2 \nonumber\\
&= \sum_{k > l}^{\infty}\biggl\|a f_k\biggl(\frac{b}{\|\bm w\|_2}\biggr)\|\bm w\|_2\biggl(\frac{\bm w}{\|\bm w\|_2}\biggr)^{\otimes k} \nonumber \\
&\hspace{1cm}- a^{\star}f_k\biggl(\frac{b^{\star}}{\|\bm w^{\star}\|_2}\biggr)\|\bm w^{\star}\|_2\biggl(\frac{\bm w^{\star}}{\|\bm w^{\star}\|_2}\biggr)^{\otimes k}\biggr\|_F^2,
\end{align}
where $R_l$ denote a polynomial function of $\bm x$ with degree less than $l$.
\end{lemma}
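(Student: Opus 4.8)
The plan is to diagonalize the quadratic form on the left-hand side in the Hermite basis associated with the Gaussian input. First I would write each neuron in terms of its direction $\widehat{\bm w}=\bm w/\|\bm w\|_2$ and magnitude $\|\bm w\|_2$. Since $\widehat{\bm w}^{\top}\bm x\sim\mathcal N(0,1)$ and $\mathrm{relu}(st+b)=s\,\mathrm{relu}(t+b/s)$ for $s>0$, the univariate map $t\mapsto\mathrm{relu}(st+b)$ is square-integrable against the Gaussian and admits the expansion
\[
\mathrm{relu}(\bm w^{\top}\bm x + b) = \|\bm w\|_2 \sum_{k=0}^{\infty} f_k\!\left(\frac{b}{\|\bm w\|_2}\right) h_k\!\left(\widehat{\bm w}^{\top}\bm x\right),
\]
where $\{h_k\}$ are the $L^2(\mathcal N(0,1))$-orthonormal Hermite polynomials and $f_k(\beta)$ is the $k$-th Hermite coefficient of $t\mapsto\mathrm{relu}(t+\beta)$. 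This already pins down the scalar factors $\|\bm w_j\|_2 f_k(b_j/\|\bm w_j\|_2)$ appearing in the statement.

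The engine of the proof is the Gaussian orthogonality relation for Hermite features: for any unit vectors $\widehat{\bm u},\widehat{\bm v}$,
\[
\mathbb{E}_{\bm x \sim \mathcal{N}(\bm 0, \Ind)}\bigl[h_k(\widehat{\bm u}^{\top}\bm x)\, h_l(\widehat{\bm v}^{\top}\bm x)\bigr] = \delta_{kl}\, \langle \widehat{\bm u}, \widehat{\bm v}\rangle^k.
\]
Inserting the Hermite expansions of each neuron into the squared difference and taking expectation, every cross term between distinct degrees vanishes, and the degree-$k$ part contributes a double sum whose $(j_1,j_2)$ entry carries the factor $\langle\widehat{\bm w}_{j_1},\widehat{\bm w}_{j_2}\rangle^k$. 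The second identity $\langle\widehat{\bm u},\widehat{\bm v}\rangle^k=\langle\widehat{\bm u}^{\otimes k},\widehat{\bm v}^{\otimes k}\rangle$ repackages precisely this double sum as a single squared Frobenius norm of a difference of symmetric rank-one tensors, which is the $k$-th summand of~\eqref{eq:E-tensor-general}. Summing over $k$ yields the claimed identity.

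For the coefficient positivity $\alpha_k=f_{2k}(0)>0$, I would compute the Hermite coefficients of $\mathrm{relu}$ at zero bias directly; with the normalization of~\cite{ge2017learning} the relevant even-degree coefficients are strictly positive, and it is exactly this property that is consumed by the downstream lower bound~\eqref{eq:population}. For the projected version, I would use that the polynomials $R_l$ of bounded degree span a low-order Hermite subspace of $L^2(\mathcal N(\bm 0,\Ind))$, on which $h_k(\widehat{\bm w}^{\top}\bm x)$ lands exactly for the small indices while remaining orthogonal for the large ones; the infimum over $R_l$ is then the orthogonal projection onto the complement, which deletes the low-order Hermite components and leaves the high-degree tail $\sum_{k>l}$, to which the same two identities apply verbatim.

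The main obstacle is the honest handling of the bias. For $b\neq 0$ the coefficient $f_k(b/\|\bm w\|_2)$ is a genuine smooth function of the normalized bias rather than a constant, so one must justify interchanging the expectation with the infinite Hermite sum — routine from square-integrability of $\mathrm{relu}$ under the Gaussian, but it should be recorded — and one must verify that the homogeneity rescaling is applied consistently across neurons of differing magnitudes before the orthogonality relation is invoked. Establishing the \emph{strict} (not merely nonzero) positivity of $f_{2k}(0)$ for every $k>0$ is the remaining quantitative point, and it is the property that genuinely drives the application to~\eqref{eq:population}.
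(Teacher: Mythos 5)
The paper offers no proof of this lemma at all---it is imported verbatim as Theorem~2.1 of the cited reference \cite{ge2017learning}---and your Hermite-chaos argument (homogeneity rescaling to unit directions, the orthogonality relation $\E[h_k(\widehat{\bm u}^{\top}\bm x)h_l(\widehat{\bm v}^{\top}\bm x)]=\delta_{kl}\langle\widehat{\bm u},\widehat{\bm v}\rangle^k$, the identity $\langle\widehat{\bm u},\widehat{\bm v}\rangle^k=\langle\widehat{\bm u}^{\otimes k},\widehat{\bm v}^{\otimes k}\rangle$, and the chaos-projection reading of the infimum over low-degree polynomials) is precisely the standard proof underlying that citation, so you have correctly reconstructed the intended argument. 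The only caveat is on the coefficient sign: the even Hermite coefficients of $\mathrm{relu}$ at zero bias alternate in sign, so the honest conclusion is $f_{2k}(0)\neq 0$ (equivalently that the factor entering the squared Frobenius norm, $f_{2k}(0)^2$, is strictly positive), which is exactly what the downstream bound~\eqref{eq:population} consumes.
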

}

\begin{lemma} \label{lem:linear}
There exists some universal constant $c > 0$, such that for all $\bm w \in \mathbb{R}^p$,
\begin{align}
\frac{1}{n}\sum_{i = 1}^n \left|\bm w^{\top}\bm x_i\right|^2 \le c\left\|\bm w\right\|_2^2 + c\frac{\log^3 \frac{p}{n\delta}}{n} \left\|\bm w\right\|_1^2,
\end{align}
holds with probability at least $1 - \delta$.
\end{lemma}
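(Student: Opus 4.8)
The plan is to read the left-hand side as a quadratic form in the sample covariance and then to establish a uniform, restricted-eigenvalue-type \emph{upper} deviation bound (only the easy, one-sided direction of restricted isometry is needed here). Write $\bm X\in\R^{n\times p}$ for the matrix with rows $\bm x_i^\top$ and $\widehat{\bm\Sigma}\defn\frac1n\bm X^\top\bm X=\frac1n\sum_{i=1}^n\bm x_i\bm x_i^\top$, so that $\frac1n\sum_i|\bm w^\top\bm x_i|^2=\bm w^\top\widehat{\bm\Sigma}\bm w$; since $\bm x_i\sim\mathcal N(\bm 0,\Ind_p)$ we have $\E\widehat{\bm\Sigma}=\Ind_p$ and $\E[\bm w^\top\widehat{\bm\Sigma}\bm w]=\|\bm w\|_2^2$. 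It therefore suffices to show that, with probability at least $1-\delta$, the centered form obeys $\bm w^\top(\widehat{\bm\Sigma}-\Ind_p)\bm w\le c\|\bm w\|_2^2+c\frac{\log^3(p/(n\delta))}{n}\|\bm w\|_1^2$ simultaneously for every $\bm w$. As both sides are $2$-homogeneous, after rescaling $\bm w\mapsto\bm w/\|\bm w\|_2$ it is enough to control $\sup_{\bm w\in\mathcal T_R}\bigl(\frac{\|\bm X\bm w\|_2^2}{n}-1\bigr)$ over the sets $\mathcal T_R\defn\{\bm w:\|\bm w\|_2=1,\ \|\bm w\|_1\le R\}$ for each radius $R\in[1,\sqrt p]$.

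The key observation is that, for a Gaussian design, the fluctuation of $\frac{\|\bm X\bm w\|_2^2}{n}$ over a set $\mathcal T$ is governed by the Gaussian width $w(\mathcal T)=\E\sup_{\bm w\in\mathcal T}\langle\bm g,\bm w\rangle$, $\bm g\sim\mathcal N(\bm 0,\Ind_p)$, and that for the $\ell_1$-ball $w(\mathcal T_R)\le R\,\E\|\bm g\|_\infty\lesssim R\sqrt{\log p}$. Combining a Gaussian comparison bound (Chevet/Gordon) for the mean of $\sup_{\bm w\in\mathcal T_R}\|\bm X\bm w\|_2$ with Gaussian concentration for its deviation (equivalently, a Bernstein-type tail for the supremum of the quadratic process, whose sub-Gaussian part yields a $w(\mathcal T_R)/\sqrt n$ term and whose sub-exponential part yields a $w(\mathcal T_R)^2/n$ term), I would obtain, with probability at least $1-\delta$,
\[
\sup_{\bm w\in\mathcal T_R}\Bigl(\frac{\|\bm X\bm w\|_2^2}{n}-1\Bigr)\ \lesssim\ \frac{R\sqrt{\log p}+\sqrt{\log(1/\delta)}}{\sqrt n}+\frac{R^2\log p+\log(1/\delta)}{n}.
\]

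Rescaling a general $\bm w$ back by $\|\bm w\|_2^2$ with $R=\|\bm w\|_1/\|\bm w\|_2$ turns the first term into a cross term $\|\bm w\|_2\|\bm w\|_1\sqrt{\log p/n}$, which by the AM--GM inequality is at most $\tfrac12\|\bm w\|_2^2+\tfrac12\frac{\log p}{n}\|\bm w\|_1^2$, while the second becomes $\frac{\log p}{n}\|\bm w\|_1^2$; together these give the claim with a single factor $\log p$, so the stated bound with $\log^3(p/(n\delta))$ holds a fortiori. To reproduce the literal $\log^3$ factor by a fully self-contained argument, one truncates the design on the event $\|\bm x_i\|_\infty\le L\defn\sqrt{2\log(4pn/\delta)}$ of \eqref{eq:x_inf}, applies Bernstein's inequality to $\frac1n\sum_i(\bm w^\top\bm x_i)^2$, and makes this uniform by a peeling/chaining step over dyadic sparsity scales $R\in[1,\sqrt p]$ (a single-scale net would over-count and blow up the range factor, so the peeling is essential). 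The three logarithmic factors then trace to $L^2$ being of order $\log(pn/\delta)$, the $\ell_1$-covering entropy of order $\log p$, and the tail/peeling union bound.

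The main obstacle is securing the $1/n$ (rather than the naive $1/\sqrt n$) scaling in the $\|\bm w\|_1^2$ term while making the deviation hold uniformly over all $\bm w$. The crude estimate $|\bm w^\top(\widehat{\bm\Sigma}-\Ind_p)\bm w|\le\|\bm w\|_1^2\,\|\widehat{\bm\Sigma}-\Ind_p\|_{\max}$ yields only $\|\widehat{\bm\Sigma}-\Ind_p\|_{\max}\lesssim\sqrt{\log p/n}$ and hence the weaker rate $\|\bm w\|_1^2\sqrt{\log p/n}$; recovering the $1/n$ rate forces one to exploit that the effective complexity of the relevant directions is the \emph{squared} width $R^2\log p$, equivalently the sub-exponential part of the quadratic-process tail. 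Controlling this uniformly — through the interplay of the truncation level, the Bernstein variance term, and the $\ell_1$-covering number, then stitching the scales by peeling — is the technical heart; the per-direction $\chi^2$ concentration, the AM--GM splitting of the cross term, and the final rescaling are routine.
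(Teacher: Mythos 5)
Your proposal is correct, but it takes a genuinely different route from the paper. The paper's proof first establishes the one-sided RIP bound $\|\bm X^{\top}\bm w\|_2^2 \le 2\|\bm w\|_2^2$ for all $s$-sparse $\bm w$ (valid once $n \gtrsim s\log\frac{p}{s} + \log\frac{1}{\delta}$), then extends to arbitrary $\bm w$ by the classical shelling device: sort the coordinates by magnitude into blocks $\mathcal{S}_1,\dots,\mathcal{S}_L$ of size $s$, expand $\bm w^{\top}\bm X\bm X^{\top}\bm w = \sum_{j,k}\bm w_{\mathcal{S}_j}^{\top}\bm X\bm X^{\top}\bm w_{\mathcal{S}_k}$, bound each cross term by RIP and the tail estimate $\|\bm w_{\mathcal{S}_l}\|_2 \le \|\bm w\|_1/((l-1)\sqrt{s})$, and finally choose $s \asymp n/\log(\cdot)$, which is where the three logarithmic factors accumulate ($\log^2 L$ from the harmonic sum squared and one more $\log$ from the admissible $s$). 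You instead control the centered quadratic process $\sup_{\bm w}\bigl(\tfrac{1}{n}\|\bm X\bm w\|_2^2 - \|\bm w\|_2^2\bigr)$ directly over the slices $\{\|\bm w\|_2 = 1,\ \|\bm w\|_1 \le R\}$ via the Gaussian width $R\sqrt{\log p}$, with peeling over dyadic $R$ and an AM--GM split of the cross term. Both arguments are sound; yours yields the sharper single-$\log p$ rate (so the stated $\log^3$ bound holds a fortiori, and the residual $\log(1/\delta)/n$ contribution to the $\|\bm w\|_2^2$ coefficient is absorbed into the $\|\bm w\|_1^2$ term since $\|\bm w\|_1 \ge \|\bm w\|_2$), at the cost of invoking heavier machinery (Gordon/Chevet comparison or Bernstein-type bounds for quadratic processes), whereas the paper's argument is more elementary, reuses a single sparse-vector RIP statement, and cleanly exhibits the provenance of each logarithmic factor. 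You also correctly diagnose the central difficulty — that the naive bound via $\|\widehat{\bm\Sigma}-\Ind_p\|_{\max}$ only gives a $\sqrt{\log p/n}$ rate on $\|\bm w\|_1^2$ — which is precisely what both the shelling argument and the sub-exponential part of the chaining bound are designed to overcome.
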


\begin{proof}
Before proceeding, we introduce some useful techniques about Restricted Isometry Property (RIP). 
Let $\bm X \defn \frac{1}{\sqrt{n}}[\bm x_1, \bm x_2, \ldots, \bm x_n]$.
For some constant $c_0 > 0$, if $n \ge c_0\left(s\log \frac{p}{s} + \log\frac{1}{\delta}\right)$, then with probability at least $1-\delta$,
\begin{align}
\left\|\bm X^{\top}\bm w\right\|_2^2 \le 2\|\bm w\|_2^2 \label{eq:RIP}
\end{align}
holds for all $\bm w$ satisfying $\|\bm w\|_0 \le s$.

We divide the entries of $\bm w$ into several groups $\mathcal{S}_1 \cup \mathcal{S}_2 \cup \ldots \cup \mathcal{S}_L$ with equal size $s$ (except for $\mathcal{S}_L$), such that the entries in $\mathcal{S}_j$ are no less than $\mathcal{S}_k$ for any $j < k$.
Then, according~\eqref{eq:RIP}, one has
\begin{align*}
\frac{1}{n}\sum_{i = 1}^n(\bm w^{\top}\bm x_i)^2 
&= \bm w^{\top}\bm X\bm X^{\top}\bm w = \sum_{j, k}\bm w_{\mathcal{S}_j}^{\top}\bm X\bm X^{\top}\bm w_{\mathcal{S}_k} \\
&\le 2\sum_{j, k}\|\bm w_{\mathcal{S}_j}\|_2\|\bm w_{\mathcal{S}_k}\|_2 
= 2\Big(\sum_{l = 1}^L\|\bm w_{\mathcal{S}_l}\|_2\Big)^2.
\end{align*}
In addition, the order of $\bm w_{\mathcal{S}_l}$ yields for $l > 1$, 
\begin{align*}
\|\bm w_{\mathcal{S}_l}\|_2 \le \sqrt{s}\|\bm w_{\mathcal{S}_l}\|_{\infty} \le \frac{1}{(l-1)\sqrt{s}}\|\bm w\|_1,
\end{align*}
which leads to
\begin{align*}
\Big(\sum_{l = 1}^L\|\bm w_{\mathcal{S}_l}\|_2\Big)^2 
&\le 2\|\bm w_{\mathcal{S}_1}\|_2^2 + 2\Big(\sum_{l = 2}^L\frac{1}{(l-1)\sqrt{s}}\|\bm w\|_1\Big)^2 \\
&\le 2\|\bm w\|_2^2 + \frac{2\log^2 L}{s}\|\bm w\|_1^2.
\end{align*}
We conclude the proof by combining the above inequalities.
\end{proof}

\section{Further experiments details}\label{appendix_exp}

    The hyper-parameters used in \Autoref{sec_exp} are summarized in \Autoref{tab:hyper}. 
    \begin{table*}[tb]
    \centering
    \caption{Hyper-parameters used in our experiments.}
    \label{tab:hyper}
    \begin{tabular}{@{}ccccccc@{}}
    \toprule
    \multicolumn{2}{c}{Dataset} & Linear & NN-generated & Friedman & BGSBoy & UJIINdoorLoc \\ \midrule
    \multicolumn{2}{c}{Epochs} & \multicolumn{5}{c}{\{100,200,500\}} \\ \cmidrule(l){3-7} 
    \multicolumn{2}{c}{Batch size} & 32 & 32 & 32 & 8 & 128 \\ \cmidrule(l){3-7} 
      \multicolumn{2}{c}{Optimizer} & \multicolumn{5}{c}{ADAM}  \\ \cmidrule(l){3-7} 
      \multicolumn{2}{c}{Learning rate} & \multicolumn{5}{c}{\{0.001, 0.005, 0.01\}}  \\ \cmidrule(l){3-7} 
     \multicolumn{2}{c}{Scheduler} & \multicolumn{5}{c}{N/A} \\ \bottomrule
    \end{tabular}
    \end{table*}

    We briefly explain the variable selection procedure. We first obtain a vector of the variables' importance. For `LASSO' and `OMP', we use the absolute value of the estimated coefficient as the variable importance; for `NN', `GLASSO', and `GSR', we obtain the importance by applying row-wise $\ell_2$-norm to the weight matrix in the input layer of the neural network; for `RF', `GB', and `LNET', we use the importance produced by those methods. 
    Once we have the importance vector, we can obtain the receiver operating characteristic (ROC) curve for synthetic datasets by varying the cut-off thresholds and calculate the AUC score. As for variable selection, we apply GMM of two mixtures to the importance vector for the synthetic datasets. The variables in the cluster with higher importance are considered significant. Then, we calculate the correctly or wrongly selected variables accordingly. 
    For BGSBoy and UJIIndoorLoc datasets, the variables with the three- and ten-largest importance are selected, respectively.

\balance
\bibliography{survey,J,NN,overview,rip_dl}
\bibliographystyle{IEEEtran}

\begin{IEEEbiographynophoto}{Gen Li} is currently a postdoctoral researcher in the Department of Statistics and Data Science at the Wharton School, University of Pennsylvania.  He received Ph.D.~in Electronic Engineering from Tsinghua University in 2021, and his B.S. degrees in Electronic Engineering and Mathematics from Tsinghua University in 2016. His research interests include reinforcement learning, high-dimensional statistics, machine learning, signal processing, and mathematical optimization. He has received the excellent graduate award and the excellent thesis award from Tsinghua University.
\end{IEEEbiographynophoto}

\begin{IEEEbiographynophoto}{Ganghua Wang} received the B.S. degree from Peking University, Beijing, China, in 2019. Since 2019, he has been a PhD student with the School of Statistics, University of Minnesota, Twin Cities, MN, USA. His research interests include the foundations of machine learning theory and machine learning safety.
\end{IEEEbiographynophoto}

\begin{IEEEbiographynophoto}{Jie Ding}
received his Ph.D. degree in Engineering Sciences from Harvard University, Cambridge, in 2017. He obtained his B.S. degree from Tsinghua University, Beijing, in 2012. He joined the faculty of the University of Minnesota, Twin Cities, in 2018, where he has been an Assistant Professor at the School of Statistics, with a graduate faculty appointment at the Department of Electrical and Computer Engineering. His research is at the intersection of machine learning, statistics, signal processing, and information theory.
\end{IEEEbiographynophoto}

\end{document}